\newtheorem{proposition}{Proposition}
\newcommand{\hide}[1]{}
\newcommand{\he}[1]{{\textsf{\textcolor{red}{[From He: #1]}}}}
\newcommand{\lc}[1]{{\textsf{\textcolor{blue}{[From Lecheng: #1]}}}}
\newcommand{\yu}[1]{{\textsf{\textcolor{green}{[Yu: #1]}}}}
\newcommand{\reviewer}[1]{{\textsf{\textcolor{blue}{[From reviewer: #1]}}}}
\newcommand{\mkclean}{
  \renewcommand{\he}[1]{}
  \renewcommand{\lc}[1]{}
  \renewcommand{\reviewer}[1]{}
   \renewcommand{\yu}[1]{}
}
\newcommand\latinabbrev[1]{
  \peek_meaning:NTF . {
    #1\@}%
  { \peek_catcode:NTF a {
      #1.\@ }%
    {#1.\@}}}
\def\eg{\latinabbrev{e.g}}
\DeclareMathOperator{\softmax}{softmax}
\DeclareMathOperator{\sigmoid}{sigmoid}
\DeclareMathOperator{\E}{\mathbb{E}}
\newcommand{\method}{{\emph {ANTS}}}
\renewcommand\footnotetextcopyrightpermission[1]{} 
\def\BibTeX{{\rm B\kern-.05em{\sc i\kern-.025em b}\kern-.08emT\kern-.1667em\lower.7ex\hbox{E}\kern-.125emX}}
\begin{document}
\title{Deep Co-Attention Network for Multi-View Subspace Learning}
\author{Lecheng Zheng$^1$, ~~~Yu Cheng$^2$,~~~Hongxia Yang$^3$,~~Nan Cao$^4$ and Jingrui He$^1$}
\affiliation{
\institution{$^1$University of Illinois at Urbana-Champaign, \{lecheng4, jingrui\}@illinois.edu; \\
$^2$Microsoft AI, chengyu05@gmail.com;\\
$^3$Alibaba Group, firewater1984@gmail.com; \\
$^4$ Tongji University, nan.cao@gmail.com}
\country{
}}

\keywords{Multi-view Learning, Attention Mechanism, Interpretable Machine Learning}

\begin{abstract}
    Many real-world applications involve data from multiple modalities and thus exhibit the view heterogeneity. For example, user modeling on social media might leverage both the topology of the underlying social network and the content of the users' posts; in the medical domain, multiple views could be X-ray images taken at different poses. To date, various techniques have been proposed to achieve promising results, such as canonical correlation analysis based methods, etc. In the meanwhile, it is critical for decision-makers to be able to understand the prediction results from these methods. For example, given the diagnostic result that a model provided based on the X-ray images of a patient at different poses, the doctor needs to know why the model made such a prediction. However, state-of-the-art techniques usually suffer from the inability to utilize the complementary information of each view and to explain the predictions in an interpretable manner. 
    
    To address these issues, in this paper, we propose a deep co-attention network for multi-view subspace learning, which aims to extract both the common information and the complementary information in an adversarial setting and provide robust interpretations behind the prediction to the end-users via the co-attention mechanism. In particular, it uses a novel cross reconstruction loss and leverages the label information to guide the construction of the latent representation by incorporating the classifier into our model. This improves the quality of latent representation and accelerates the convergence speed. Finally, we develop an efficient iterative algorithm to find the optimal encoders and discriminator, which are evaluated extensively on synthetic and real-world data sets. We also conduct a case study to demonstrate how the proposed method robustly interprets the predictions on an image data set. 
\end{abstract}
\date{}
\maketitle
\section{Introduction}
In many real-world applications, data are usually collected from multiple sources or modalities, exhibiting the view heterogeneity. For example, many images posted on Facebook or Twitter are usually surrounded by the text descriptions, both of which can be considered as two distinct views; in the face attribute classification problem, the data consist of different poses of the same person, and each pose can be considered as a single view with complementary information to each other; in stock price forecasting, the related factors include not only historical stock prices and financial statements from companies, but also news, weather, etc.

Up to now, many researchers have proposed various techniques to model the view heterogeneity based on different assumptions.
Some works assume that there exists a latent lower-dimensional subspace shared by multiple views, and these views can be reconstructed from this subspace. However, these state-of-the-art methods usually suffer from the inability to utilize the complementary information of each view and the label information to enhance the quality of the representation. For example, canonical correlation analysis (CCA)~\cite{hotelling1936relations}, kernel canonical correlation analysis (KCCA)~\cite{SuGY17}, deep canonical correlation auto-encoder (DCCAE)\cite{WangLL16a} aim to explore the linear/non-linear transformation of multi-view features by maximizing the correlation coefficient of the two views. However, in addition to not utilizing the label information, the major drawback of \cite{wang2019deep, hotelling1936relations, SuGY17, WangLL16a} is that they ignore the complementary information in the data since the CCA based methods target to extract the common information that is mostly correlated between views, which might result in a sub-optimal solution.

Recent years have witnessed the tremendous efforts devoted to developing interpretable learning algorithms~\cite{guidotti2018survey, selvaraju2017grad, lundberg2017unified, ribeiro2016should, koh2017understanding,zhou2020domain}.
Understanding the reasons behind the prediction is of key importance for those who plan to take action based on the prediction. For example, in the medical domain, a doctor expects to know why the model made a prediction of a potential disease given the X-ray images of a patient taken at different poses; in the financial domain, when investors utilize a model to predict the trend of the stock price, they expect to see the reasons behind the prediction so that they could analyze a large number of assets to form a diversified portfolio and mitigate risks in a reasonable way. However, in multi-view learning, most existing techniques ignore the interpretability of the predictive model. Although we could simply apply the existing explanation methods on the concatenated multi-view features to interpret the result, it may suffer from the noisy features by accidentally including such features for interpretation. On the other hand, utilizing the consensus information of multi-view data could potentially help us build a more robust interpretable model against the noisy features.

Motivated by these limitations, we propose a deep \underline{a}dversarial co-atte\underline{nt}ion model for multi-view \underline{s}ubspace learning named \method, extracting both the shared information and the view-specific information in an adversarial manner and providing the robust interpretation behind the prediction to the end-users. Within this model, the co-attention encoder module is designed to extract the common information shared by multiple views with the help of a view discriminator, and interpret the predictive results by weighing the importance of the input features; the decoder module is designed to project the common information back to the original input space; the view discriminator is included to regularize the generation quality of the shared representation. After reconstructing the multi-view features based on the shared representation, we use the residual between the original input features and the reconstructed features as the view-specific information or the complementary information. We integrate both the common information and complementary information to yield richer and more discriminative feature representation. 
Our main contributions are summarized below:
\begin{itemize}
    \item A novel deep model for multi-view subspace learning, which extracts both the common information and the complementary information, and provides a robust model explanation for each view via co-attention module.
    \item Novel cross reconstruction loss and the use of label information to guide the construction of the latent representation.
    \item A case study that show how the proposed method interprets the predictions on an image data set.  
    \item Experimental results on synthetic and real-world data sets, which demonstrate the effectiveness of the proposed model.
\end{itemize}

The rest of this paper is organized as follows. After a brief review of the related work in Section 2, we introduce our proposed model for deep multi-view subspace learning in Section 3.  The systematic evaluation of the proposed method on synthetic and real-world data sets is presented in Section 4. In Section 5, we conduct a case study to show how the proposed method interprets the prediction on an image data set before we conclude the paper in Section 6.

\section{Related Work}
In this section, we briefly review the related work on multi-view learning, interpretable Learning, adversarial learning, and cycle-consistency.
\subsection{Multi-view Learning and Interpretable Learning}
Learning multi-view data has been studied for decades. In multi-view learning~\cite{abs-1304-5634}, researchers aim to model the similarity and difference among multiple views. The existing algorithms can be classified into three categories: 1) co-training based methods, 2) multiple kernel learning, and 3) subspace learning. Co-training~\cite{BlumM98} is one of the earliest methods proposed for multi-view learning, which aims to find the maximal consistency of several independent views given only a few labeled examples and many unlabeled ones. Since then, many variants of co-training have been proposed to find the consistency among views. For example, \cite{YuKRSR07} proposed an undirected graphical model for co-training to minimize the disagreement among multi-view classifiers. In multiple kernel learning, \cite{FarquharHMSS05} proposed a two-view Support Vector Machine method (SVM-2K) to find multiple kernels to maximize the correlation of the two views; \cite{sindhwani2005co} proposed a co-regularization method to jointly regularize two Reproducing Kernel Hilbert Spaces associated with the two views; \cite{andrew2013deep} proposed Deep Canonical Correlation Analysis to find two deep networks such that the output layers of the two networks are maximally correlated. As for the subspace learning, the authors of ~\cite{JiaoX17} proposed a deep multi-view robust representation learning algorithm based on auto-encoder to learn a shared representation from multi-view observations;\cite{HeDZYHL16} proposed online Bayesian subspace multi-view learning by modeling the variational approximate posterior inferred from the past samples;
\cite{DBLP:conf/icdm/ZhouH17} proposed M2VW for multi-view multi-worker learning problem by leveraging the structural information between multiple views and multiple workers; ~\cite{TianPZZM18} proposed CR-GAN method to learn a complete representation for multi-view generations in the adversarial setting by the collaboration of two learning pathways in a parameter-sharing manner. 
Different from~\cite{TianPZZM18, DBLP:conf/icdm/ZhouH17, DBLP:conf/ijcai/ZhouH16, DBLP:conf/sdm/ZhouYH17, DBLP:journals/tkdd/ZhouYH19, DBLP:conf/sdm/ZhengCH19, DBLP:conf/aaai/ZhouLSZHCK15, DBLP:conf/ijcai/ZhouHCD15, DBLP:conf/icdm/ZhouHCS16}, 
in this paper, we focus on multi-view classification problem and aim to extract both the shared information and the view-specific information in the adversarial setting, and the view consistency constraint with label information is utilized to further regularize the generated representation in order to improve the predictive performance. 

Recently, more and more studies on model explanation~\cite{guidotti2018survey, selvaraju2017grad, lundberg2017unified, ribeiro2016should, koh2017understanding,zhou2020domain} reveal a surge of research interest in the model interpretation. \cite{ribeiro2016should} is one of the earliest works in the model interpretation, which proposes the LIME algorithm to explain the predictions of any model in an interpretable manner. In~\cite{koh2017understanding}, the authors propose a black-box explanation algorithm to interpret how a training example influences the parameters of a model; in~\cite{zhou2020domain}, the authors propose a domain adaptive attention network to explore the relatedness of multiple tasks and leverage consistency of multi-modality financial data to predict stock price. In this paper, we leverage the co-attention mechanism to interpret the prediction by weighting the importance of the input features.

\subsection{Adversarial Learning and Cycle Consistency}
Adversarial learning is a technique attempting to fool models through malicious input, which is a promising way to train robust deep networks, and can generate complex samples across diverse domains \cite{GoodfellowPMXWOCB14,44904, NIPS2017_6815,zhu2020freelb, zhou2019misc, DBLP:conf/kdd/ZhouZ0H20}. The generative adversarial networks (GANs) \cite{GoodfellowPMXWOCB14} is an effective approach to estimate intractable probabilities, learned by playing a min-max game between generator and discriminator. 
\cite{44904} applied both reconstruction error and adversarial training criteria to a traditional auto-encoder. The MMD-GAN work \cite{NIPS2017_6815} introduced the adversarial kernel learning technique for the discriminator loss. 
Recently, some adversarial learning approaches \cite{ganin2016domain, DBLP:journals/tacl/ZhangBJ17,8099799, DuDXZW18, LiuQH17} have been proposed to minimize the distance between feature distributions. The domain-adaption works in \cite{ganin2016domain,DBLP:journals/tacl/ZhangBJ17,8099799} tried to learn a domain-invariant representation in adversarial settings. In~\cite{DuDXZW18}, the adversarially learned inference model aimed to find the shared latent representations of both views. \cite{LiuQH17} proposed an adversarial multi-task learning framework to separate the shared and private features of multiple tasks. Different from these methods solely focusing on aligning the global marginal distribution by fooling a domain discriminator, we explore to further align the learned representation by considering the label information. 
On the other hand, the cycle-consistency or the idea of using transitivity to regularize structured data has been applied in many applications, including image matching~\cite{ZhouLYE15}, co-segmentation~\cite{WangHOG14}, style transfer~\cite{ZhuPIE17}, etc. In~\cite{ZhouKAHE16, GodardAB17}, the cycle consistency constraint is utilized as a regularizer to push the mappings to be as consistent with each other as possible in the supervised convolution neural network training.
In~\cite{ZhuPIE17}, the authors proposed the Cycle-Consistent generative adversarial network (Cycle-GAN) to learn two mappings or generators and two discriminators by using transitivity to supervise CNN training.
Different from Cycle-GAN~\cite{ZhuPIE17}, our paper mainly focuses on finding a latent representation shared by multiple views by leveraging the label information and extending the cycle consistency idea, rather than transforming data from one domain to another domain.

\section{Proposed \method\ Framework}
In this section, we present our proposed \method, a deep co-attention multi-view subspace learning framework. We start by introducing the major notation used in this paper and then discuss the extraction of both the shared latent information and the view-specific information, as well as how to leverage the label information and incorporate the classification model. Notice that in the subsequent discussion, for the sake of exposition, we focus on multi-view learning with two views. The extension of the proposed techniques to the more generic settings with multiple views is discussed in Subsection 3.4.

\begin{figure}[ht]
\includegraphics[width=1\linewidth]{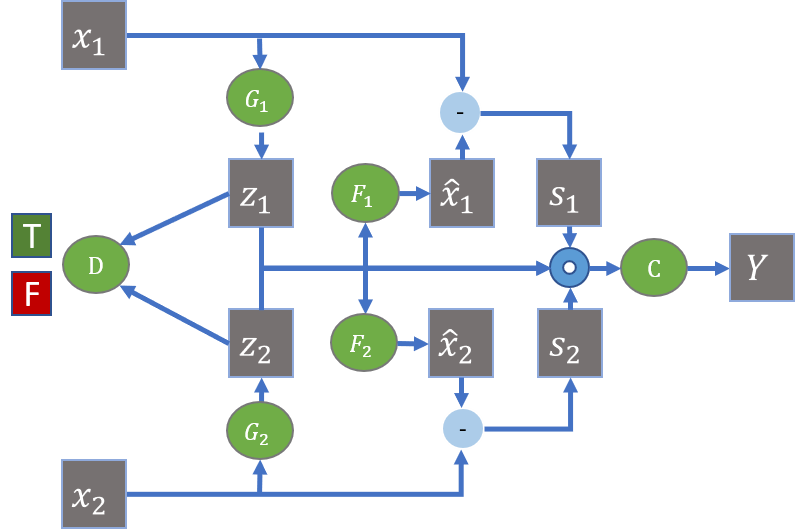}
\caption{Given an example drawn from a data set with two views $\bm{x_1}$ and $\bm{x_2}$,  we use two encoders $G_1(\cdot)$ and $G_2(\cdot)$ to generate the shared latent representation $\bm{z}$. The goal of the discriminator $D(\cdot)$ is to determine whether the shared latent representation $\bm{z}$ is generated by the first view (True) or the second view (False). To constrain the generated representation, two decoders $F_1(\cdot)$ and $F_2(\cdot)$ are utilized to transform the shared representation (either $z_1$ or $z_2$) back to two recovered views $\hat{\bm{x}}_1$ and $\hat{\bm{x}}_2$, and we want the recovered views to be as similar to the real views as possible.
\he{There's no discriminator trying to distinguish the original features and the reconstructed features. So you should not say that the reconstructed views are fake, as they were not generated from random noise, but rather the shared representation obtained from the encoder.} \lc{updated}
Meanwhile, we extract the view-specific information $s_1$ and $s_2$ by simply subtracting $\hat{\bm{x}}_1$ from $\bm{x_1}$ and subtracting $\hat{\bm{x}}_2$ from $\bm{x_2}$, respectively. After extracting both the shared information and the view-specific information, we concatenate them in a hidden layer and feed them into a classifier $C(\cdot)$. Finally, label information $\bm{Y}$ is utilized to enforce the encoders to create better representations in order to improve the predictive performance of the classifier.}
\he{From this figure, I do not see any difference between your way of utilizing the label information and the traditional way of constructing a classifier.}\lc{updated}
\label{structure}
\end{figure}

\subsection{Notation}
Throughout this paper, we use lower-case letters for scalars (e.g., $\gamma$), a bold lower-case letter for a sample (e.g., $\bm{x^j}$), and a bold upper-case letter for a matrix (e.g., $\bm{X}$).
We use $\mathcal{D} ={(\bm{X_1}, \ldots, \bm{X_v}, \bm{Y})}$ to denote a data set, where $\bm{X_i} \in \mathbb{R}^{n \times l_i}$ is the feature for the $i^{th}$ view, $\bm{Y}\in \mathbb{R}^{n \times c}$ is the binary label matrix, $n$ is the number of samples, $l_i$ is the dimension of the input feature, $v$ is the number of views which is set to be 2 if not specified, and $c$ is the number of classes. For a single sample $\bm{X_i(j,:)}\in \mathbb{R}^{l_i}$, if it is image data, we could apply image partition algorithm, \eg,~Mask R-CNN~\cite{he2017mask}, to split this image into multiple segments; if it is text data, we could use Word2vec~\cite{mikolov2013distributed} to extract the input feature. Following this idea, we could reshape the dimension of a sample, \eg., $\bm{x_i^j} \in \mathbb{R}^ {d_i \times k_i}$, where $k_i \times d_i = l_i$, $\bm{x_i^j}=\{\bm{x_{i,1}^j}; \bm{x_{i,2}^j}; ...; \bm{x_{i, k_i}^j}\}$ and $\bm{x_{i, k_i}^j}\in \mathbb{R}^{d_i}$. For image data, $k_i$ is the number of image segments and $d_i$ is the dimension of the representation of each segment (for text data, $k_i$ is the number of words and $d_i$ is the dimension of word embedding).
We denote $G_i(\cdot)$ and $F_i(\cdot)$ as an encoder-decoder pair for the $i^{th}$ view, one for projecting the data point to the shared subspace and another for transferring the projection back to the original data point. We also denote $D(\cdot)$ as the discriminator, $C(\cdot)$ as the classifier and $P_{g_i}(\bm{X_i})$ as a prior on the $i^{th}$ view. 
Given a sample $x^j$, we denote $\bm{z}\in \mathbb{R}^{h}$ as the shared representation generated by encoders, $\hat{\bm{x}}_i^j \in \mathbb{R}^{d_i \times k_i}$ as the $i^{th}$ the recovered view generated by $F_i(\cdot)$, $\bm{z^j}\in \mathbb{R}^{h}$ as the representation generated from a sample $x_i^j$ by $G_i(\cdot)$, where $h$ is the dimension of the hidden representation. Note that $\hat{Y}\in \mathbb{R}^{n \times c}$ is the prediction made by the classifier $C(\cdot)$ for labeled data, and $\hat{Y}_i\in \mathbb{R}^{n \times c}$ is the prediction of the $i^{th}$ view for unlabeled data.

\subsection{Objective Function}
Now, we are ready to introduce the overall objective function:
\begin{equation}
    \label{overall}
    \begin{split}
        \min_{G_1, G_2, F_1, F_2, C} \max_{D} L &= L_0(G_1, G_2, D)  + \alpha L_v(G_1, G_2, F_1, F_2)\\
        & + \beta L_c(C, G_1, G_2, F_1, F_2)
    \end{split}
\end{equation}
where $\alpha$ and $\beta$ are two positive constants that balance the three terms in the objective function: $L_0$ is the objective function of the min-max game, $L_v$ is the cross reconstruction loss to regularize the generation of the representation and $L_c$ is the cross-entropy loss by leveraging the label information. Figure~\ref{structure} provides an overview of the proposed framework. Next, we elaborate on each module respectively.

\begin{figure}[ht]
\includegraphics[width=0.80\linewidth]{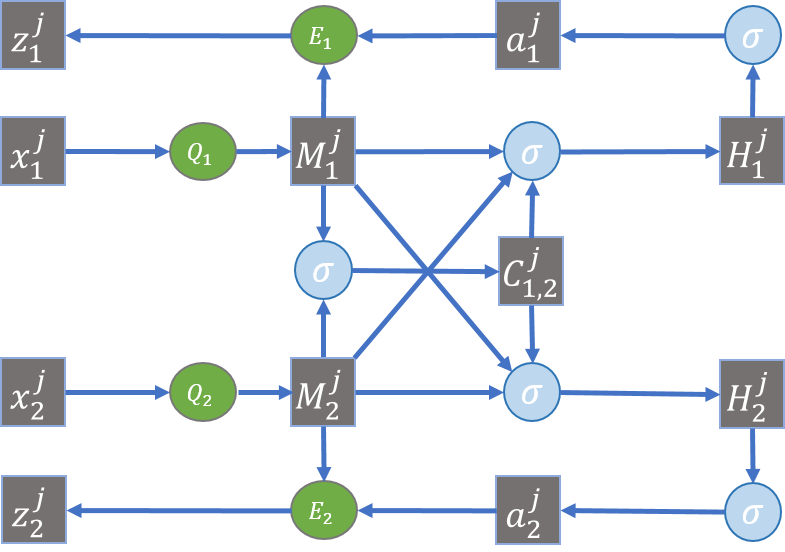}
\caption{Co-attention encoder module, where $Q_1$, $Q_2$, $E_1$ and $E_2$ are neural networks, and $\sigma$ is the non-linear transformation function, \eg, $\tanh$ or $\softmax$.}
\label{co-attention encoder}
\end{figure}

\subsubsection{Co-attention Encoder Module}
The main idea of co-attention encoder is to explore the correlation of each pair of features in two views
in order to select the most important features by weighing the importance of the input features via co-attention mechanism~\cite{lu2016hierarchical, liu2020decoupled}. Please refer to Figure~\ref{co-attention encoder} for the network structure of the co-attention encoder module. Suppose we are given a sample $\bm{x^j}$ with two views, the affinity matrix $\bm{C_{1,2}^j}\in \mathbb{R}^{k_1 \times k_2}$ of these two views is computed by:
\begin{equation}
    \begin{split}
        \bm{M_1^j} &= Q_1(\bm{x_1}^j), \indent \bm{M_2^j} = Q_2(\bm{x_2}^j) \\
        \bm{C_{1,2}^j} & = \tanh{(\bm{(M_1^j)}^T  \bm{W_{1,2}}\bm{M_2^j})}
    \end{split}
\end{equation}
where $\tanh(x)=\frac{e^x-e^{-x}}{e^x+e^{-x}}$,  $\bm{M_1}\in \mathbb{R}^{d_1 \times k_1}, \bm{M_2}\in \mathbb{R}^{d_2 \times k_2}$ are two hidden representations mapped by two neural networks ($\textit{Q}_1$ and $\textit{Q}_2$) and $\bm{W_{1,2}}\in \mathbb{R}^{d_1 \times d_2}$ is a weight matrix shared by all samples. 
Basically, this affinity matrix measures the similarities between the segments among two views. By leveraging such an affinity matrix of two views, we aim to encode the information from both views in the hidden representation $H_1$ and $H_2$ that could be formulated as follows:
\begin{equation}
    \begin{split}
        H_1^j &= \tanh{(\bm{W_1}\bm{M_1^j} + (\bm{W_2} \bm{M_2^j})\bm{(C_{1,2}^j)^T)}} \\
        H_2^j &= \tanh{(\bm{W_2}\bm{M_2^j} + (\bm{W_1}\bm{M_1}^j)\bm{C_{1,2}^j)}}
    \end{split}
\end{equation}
where $\bm{W_1}\in \mathbb{R}^{d_3 \times d_1}$ and $\bm{W_2}\in \mathbb{R}^{d_3 \times d_2}$ are weight matrices and $d_3$ is the output dimension. By encoding the information from both views into two hidden representations $H_1^j$ and $H_2^j$, we aim to capture the consensus information of two views so that we could build a more robust interpretable model to mitigate the negative impact of noisy features. The importance of the input features is measured as follows:
\begin{equation}
\label{eq:attention_weights}
    \begin{split}
        \bm{a_1^j} = \softmax{}{(\bm{w_{h1}}\bm{H_1^j})}, \indent \bm{a_2^j} = \softmax{(\bm{w_{h2}}\bm{H_2^j})}
    \end{split}
\end{equation}
where $\bm{w_{h1}}\in \mathbb{R}^{d_3}$ and $\bm{w_{h2}}\in \mathbb{R}^{d_3}$ are weight vectors. Based on the above attention weight vectors $\bm{a_1^j} \in \mathbb{R}^{k_1}$ and $\bm{a_2^j} \in \mathbb{R}^{k_2}$, the extracted representation for the first view and the second view can be calculated by:
\begin{equation}
\label{attention}
    \begin{split}
        \bm{z_1^j} = E_1(\sum_{l=1}^{k_1} \bm{a_{1, l}^j} \bm{M_{1, l}^j}), \indent \bm{z_2^j} = E_2(\sum_{l=1}^{k_2} \bm{a_{2, l}^j} \bm{M_{2, l}^j})
    \end{split}
\end{equation}
where $E_1(\cdot)$ and $E_2(\cdot)$ are two neural networks that map two views to a latent space denoted as $\bm{z}$. Here, we denote the entire structure of the co-attention module as $G_1(\cdot)$ and $G_2(\cdot)$ for the first and second view, respectively (\eg.,~$z_i = G_i(x_i)$). 
In the next subsection, we introduce the discriminator module to help encoders extract the common representation shared by two views.

\subsubsection{Discriminator Module for Extraction of Shared Representation}
The discriminator module aims to help the encoder module find the latent representation shared by two views, which will be utilized for training a high-quality classifier. To learn the distribution of the two encoders $P_{g_1}$ and $P_{g_2}$ over the shared latent space $\bm{\bm{z}}$, we define $P_{g_1}(\bm{X_1})$ to be a prior on the first view $\bm{X_1}$ and $P_{g_2}(\bm{X_2})$ to be a prior on the second view $\bm{X_2}$. Thus, $\bm{\bm{z}} \sim P_{g_1}(\bm{X_1})$ is produced by the first data encoder $G_1(\cdot)$ fed with view $\bm{X_1}$; at the same time, $\bm{\bm{z}} \sim P_{g_2}(\bm{X_2})$ is produced by the second data encoder $G_2(\cdot)$ fed with view $\bm{X_2}$. Let $D(\cdot)$ be the discriminator that distinguishes whether the shared latent representation $\bm{z}$ is generated by the first encoder $G_1(\bm{x_1})$ or by the second encoder $G_2(\bm{x_2})$. The goal of the discriminator $D(\cdot)$ is to maximize the probability of assigning the correct labels to the representation generated by both encoders, e.g., assigning true to the representation generated by the first view and false to the representation generated by the second view. Two encoders aim to minimize the probability that the discriminator $D(\cdot)$ successfully distinguishes the representation generated from the first view or the second view.  Thus, the min-max objective function can be formulated by:
\begin{equation}
    \label{disc}
    \begin{split}
        \min_{G_1, G_2}\max_{D} L_0  & = \E_{\bm{x_1}\sim p(\bm{X_1})}[\log(D(G_1(\bm{x_1})))] \\
        & + \E_{\bm{x_2}\sim p(\bm{X_2})}[\log(1 - D(G_2(\bm{x_2})))]\\
        & = \E_{\bm{\bm{z}}\sim P_{g_1}(\bm{X_1})}[\log(D(\bm{\bm{z}}))] \\
        & + \E_{\bm{\bm{z}}\sim P_{g_2}(\bm{X_2})}[\log(1 - D(\bm{\bm{z}}))]
    \end{split}
\end{equation}
\he{The notation in the above equation is not correct.} \lc{updated}
In this min-max game, two encoders $G_1(\cdot)$ and $G_2(\cdot)$ aim to generate the same latent representation $\bm{\bm{z}}$ shared by both views so that the discriminator $D(\cdot)$ cannot reliably decide which encoder creates such a latent representation. 

\begin{proposition}
For the fixed encoders $G_1$ and $G_2$, the optimal discriminator $D^*(\bm{z})$ of Equation ~\ref{overall} is given by $D^*(\bm{z}) = \frac{P_{g_1}}{P_{g_1} + P_{g_2}}$.
\end{proposition}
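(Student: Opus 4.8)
The plan is to reduce the whole maximization over $D$ to the classical GAN argument of Goodfellow et al., since $D$ appears in only one of the three terms of Equation~\ref{overall}. Concretely, I would first observe that neither the cross reconstruction loss $L_v(G_1, G_2, F_1, F_2)$ nor the cross-entropy loss $L_c(C, G_1, G_2, F_1, F_2)$ depends on the discriminator, so for fixed encoders (and fixed $F_i$, $C$) we have
\begin{equation}
    \label{eq:reduce}
    \max_{D} L = \alpha L_v + \beta L_c + \max_{D} L_0(G_1, G_2, D).
\end{equation}
Hence it suffices to maximize $L_0$ over $D$, and the claimed $D^*$ is exactly the maximizer of $L_0$.

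Next I would rewrite $L_0$ as a single integral in the latent variable $\bm{z}$. Using the change of variables induced by the encoders, the expectation $\E_{\bm{x_1}\sim p(\bm{X_1})}[\log D(G_1(\bm{x_1}))]$ equals $\E_{\bm{z}\sim P_{g_1}}[\log D(\bm{z})]$, and similarly for the second view, as already recorded in the second line of Equation~\ref{disc}. Writing these expectations as integrals against the induced densities gives
\begin{equation}
    \label{eq:integral}
    L_0 = \int_{\bm{z}} \Bigl( P_{g_1}(\bm{z})\log D(\bm{z}) + P_{g_2}(\bm{z})\log\bigl(1 - D(\bm{z})\bigr) \Bigr)\, d\bm{z}.
\end{equation}

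I would then maximize the integrand pointwise. For a fixed $\bm{z}$, set $a = P_{g_1}(\bm{z})$, $b = P_{g_2}(\bm{z})$, and consider $f(t) = a\log t + b\log(1-t)$ on $t \in (0,1)$. Solving $f'(t) = a/t - b/(1-t) = 0$ yields $t^* = a/(a+b)$, and since $f''(t) = -a/t^2 - b/(1-t)^2 < 0$ the function $f$ is strictly concave, so $t^*$ is the unique maximizer. Because the discriminator is an arbitrary measurable map $\bm{z}\mapsto[0,1]$, the choice $D^*(\bm{z}) = P_{g_1}(\bm{z})/(P_{g_1}(\bm{z}) + P_{g_2}(\bm{z}))$ simultaneously maximizes the integrand at every $\bm{z}$, hence maximizes the integral in Equation~\ref{eq:integral}; combined with Equation~\ref{eq:reduce} this is the optimal discriminator for the full objective, as claimed.

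The step I expect to be the main obstacle, and the one I would state carefully rather than gloss over, is the justification that pointwise maximization of the integrand actually yields the global maximizer of $L_0$. This requires that the discriminator is unconstrained over the class of measurable functions into $[0,1]$ (so the pointwise optimizer is admissible), and it requires restricting attention to the support where $P_{g_1}(\bm{z}) + P_{g_2}(\bm{z}) > 0$, since on the complement the integrand is identically zero and $D$ is unconstrained there. I would also note the mild regularity assumption implicit in the change of variables of Equation~\ref{eq:integral}, namely that the encoders induce well-defined densities $P_{g_1}, P_{g_2}$ with respect to a common base measure so that the single integral is meaningful.
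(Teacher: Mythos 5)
Your proposal is correct and follows essentially the same route as the paper's proof: reduce to maximizing $L_0$, write it as a single integral over $\bm{z}$ against the induced densities $P_{g_1}, P_{g_2}$, and maximize the integrand pointwise via the fact that $a\log c + b\log(1-c)$ peaks at $c = a/(a+b)$. Your version is more careful than the paper's (you verify concavity, handle the support issue, and justify that pointwise optimization over an unconstrained measurable discriminator gives the global maximum), but the underlying argument is the same standard one from Goodfellow et al.
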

\begin{proof}
Given the fixed encoder $G_1$ and $G_2$, the training criterion for the discriminators is to maximize\\
$L_0 = \int_z P_{g_1}\log(D(\bm{z})) dz + \int_z  P_{g_2} \log(1 - D(\bm{z}))dz$\\
$L_0  = \int_z [P_{g_1}\log(D(\bm{z})) + P_{g_2} \log(1 - D(\bm{z}))]dz$\\
For any (a,b) $\in \mathbb{R}^2 \setminus {(0,0)}$, the function $f(c)= a\log(c) + b(\log(1-c))$ achieves its maximum at $c=\frac{a}{a+b}$. Thus, $D^*(\bm{z}) = \frac{P_{g_1}}{P_{g_1} + P_{g_2}}$. 
\end{proof}

\subsubsection{Decoder Module for Extraction of View-specific Representation}
In this subsection, we propose to reconstruct the views based on the extracted shared representation to enforce the shared representation to be robust and only contain the common information. Besides, subtracting the reconstructed data from the original input features leaves us with the information specific to each view, which could be utilized to further boost the predictive performance. To reconstruct the views, we exploit the idea of transitivity to regularize structured data used in~\cite{ZhuPIE17} in order to restrict the distribution of the representation.
Our main intuition is that we could find the latent representation shared by the two views in a low-dimensional space and reconstruct these two views from this shared representation generated by either view. In other words, since the representation $\bm{z}$ is shared by two views, if we use the encoder $G_2(\bm{x_2})$ to generate the shared representation $\bm{z}$, the decoder $F_1(\bm{z})$ should be able to transform the representation $\bm{z}$ back to the first view $\bm{x_1}$. Similarly, the decoder $F_2(\bm{z})$ should also be able to transform the representation $\bm{z}$ generated by the encoder $G_1(\bm{x_1})$ back to the second view $\bm{x_2}$. 
Therefore, we let $\hat{\bm{x}}_i=\gamma_1 \hat{\bm{x}}_{(i,1)} + \gamma_2 \hat{\bm{x}}_{(i,2)}$, where $\gamma_1 + \gamma_2=1$,  $\hat{\bm{x}}_{(i,j)}$ means the reconstruction of the $i^{th}$ view from the $j^{th}$ view, and $\gamma_1, \gamma_2\in [0,1]$ are positive learnable parameters that balance between the two reconstructed views. In this way, the loss function can be updated as follows:
\he{It should be $L_v$ instead of $L$, right?} \lc{updated}
\begin{align}
        L_v & = \E_{\bm{x_1}\sim p(\bm{X_1})} \gamma_1 \|\bm{x_1}-\hat{\bm{x}}_1 \|_2^2 + \E_{\bm{x_2}\sim p(\bm{X_2})} \gamma_2 \|\bm{x_2}-\hat{\bm{x}}_2 \|_2^2 \notag\\
                 & = \E_{\bm{x_1}\sim p(\bm{X_1})} \gamma_1 \|\gamma_1\bm{x_1} + \gamma_2\bm{x_1}- \gamma_1\hat{\bm{x}}_{(1,1)}-\gamma_2\hat{\bm{x}}_{(1,2)}\|_2^2 \notag\\
                 & + \E_{\bm{x_2}\sim p(\bm{X_2})} \gamma_2 \|\gamma_1\bm{x_2} +\gamma_2 \bm{x_2}- \gamma_2\hat{\bm{x}}_{(2,2)}-\gamma_1\hat{\bm{x}}_{(2,1)} \|_2^2 \notag\\
                 & = \E_{(\bm{x_1}, \bm{x_2})\sim p(\bm{X_1}, \bm{X_2})}  \gamma_1 \|\gamma_1(\bm{x_1}- \hat{\bm{x}}_{(1,1)}) + \gamma_2 (\bm{x_1}-\hat{\bm{x}}_{(1,2)})  \|_2^2 \notag \\
                 & + \E_{(\bm{x_1}, \bm{x_2})\sim p(\bm{X_1}, \bm{X_2})}  \gamma_2 \|\gamma_2(\bm{x_2}- \hat{\bm{x}}_{(2,2)}) + \gamma_1(\bm{x_2}- \hat{\bm{x}}_{(2,1)}) \|_2^2 \notag\\
                 & \leq 2\E_{(\bm{x_1}, \bm{x_2})\sim p(\bm{X_1}, \bm{X_2})}[\gamma_1^3\|\bm{x_1}- \hat{\bm{x}}_{(1,1)}\|_2^2 +  \gamma_2^3\|\bm{x_2}- \hat{\bm{x}}_{(2,2)}\|_2^2\notag\\
                 & + \gamma_1 \gamma_2^2 \|\bm{x_1}- \hat{\bm{x}}_{(1,2)}\|_2^2 + \gamma_2\gamma_1^2 \|\bm{x_2}- \hat{\bm{x}}_{(2,1)}\|_2^2]\notag
\end{align}
where the last inequality is based on $\|a+b\|_2^2\leq 2\|a\|_2^2 + 2\|b\|_2^2$, $\forall a,b\in\mathbb{R}$. 
Based on the above analysis, we define the view reconstruction loss $L_v$ in the overall objective function as follows:
\he{Remove $\min$ in the equation below.}\lc{updated}
\begin{align}
\label{loss_3}
        L_v & = \E_{(\bm{x_1}, \bm{x_2})\sim p(\bm{X_1}, \bm{X_2})}[\gamma_1^3\|\bm{x_1}- \hat{\bm{x}}_{(1,1)}\|_2^2 +  \gamma_2^3\|\bm{x_2}- \hat{\bm{x}}_{(2,2)}\|_2^2\notag\\
                 & + \gamma_1 \gamma_2^2 \|\bm{x_1}- \hat{\bm{x}}_{(1,2)}\|_2^2 + \gamma_2\gamma_1^2 \|\bm{x_2}- \hat{\bm{x}}_{(2,1)}\|_2^2]
\end{align}
One advantage of this constraint is that it enforces the representations $\bf{z}_1$ and $\bf{z}_2$\hide{\he{Is it $\bf{z}_1$ and $\bf{z}_2$ instead?}} generated by two encoders $G_1(\cdot)$ and $G_2(\cdot)$ to be as similar as possible in order to minimize the reconstruction error, which helps the encoders to find the shared representation at a faster pace. After extracting the common information shared by multiple views, we also want to extract the view-specific representation to further boost the predictive performance because the view-specific information contains complementary features for the sake of classification. 
For each sample $x_i$, the view-specific information $\bm{s_i}$ for the $i^{\textrm{th}}$ view could be easily extracted by subtracting the constructed view from the original view, written as follows:\\
\begin{equation}
    \label{residual}
    \begin{split}
    \bm{s_i} &= \bm{x_i}-\hat{\bm{x}}_i = \bm{x_i}-\gamma_1 \hat{\bm{x}}_{(i,1)} - \gamma_2 \hat{\bm{x}}_{(i,2)} \\
    \end{split}
\end{equation}
To leverage both the shared and complementary information from all the views for the downstream classification task, we propose to concatenate both common information and complementary information from all views to make final prediction.

\subsubsection{Classification Module for Incorporating Label Information}
With both the shared information and the view-specific information, we propose to incorporate the classifier in our model to enhance the quality of the concatenated representation.
As mentioned earlier, one drawback of CCA based methods is that they only capture the information learned from the features but ignore the useful information from the labels. Nevertheless, the label information can be used to significantly improve the model performance, because the examples with different labels tend to have different features, and utilizing the label information may result in better representation for classification purposes. In our setting, we denote $C(\cdot)$ as a multi-layer neural network or a classifier that takes both the shared representation and the view-specific information as the input and outputs the prediction results.
We denote $\bm{Y}$ as the ground truth label and $\hat{Y}=C(z_s \oplus s_1 \oplus s_2)$ as the label predicted by the classifier, where $\oplus$ is concatenation operator and $z_s= (z_1 + z_2)/2$. Here we aim to minimize the following loss function:
\he{Please remove $\min$ below as this is only one part of your overall objective.}\lc{updated}
\begin{equation}
    \label{clf}
    \begin{split}
        L_c & =  \E_{(\bm{x_1}, \bm{x_2})\sim p(\bm{X_1}, \bm{X_2})}\mathbb{H}(\bm{\bm{Y}}, \hat{Y})
    \end{split}
\end{equation}
where $\mathbb{H}(\bm{\bm{Y}}, \hat{Y})$ is the cross-entropy loss.
One goal of the incorporated classifier is to help the two encoders $G_1(\cdot)$ and $G_2(\cdot)$ find a better representation by leveraging the label information. Notice that according to Eq.~\ref{attention}, with the help of the incorporated classifier, \method~ tends to assign a small weight to noisy or irrelevant features but a large weight to important features to minimize the prediction loss. This could further help us get a robust interpretable model and alleviate the negative influence of the noisy features. Meanwhile, when the final shared representation is determined, the classifier is also well-trained and ready for the prediction, which saves the time for training the classifier after finding the shared representation.

In addition, it is straightforward to extend the model to include unlabeled data. Similar to the idea of co-training~\cite{BlumM98}, these two predicted labels for unlabeled data should be consistent with each other. Thus, the loss function can be revised as follows:
\begin{equation}
    \begin{split}
        L_c = \E_{(\bm{x_1}, \bm{x_2})\sim p(\bm{X_1}, \bm{X_2})}\mathbb{H}(\hat{Y}_1, \hat{Y}_2) + \mathbb{H}(\bm{\bm{Y}}, \hat{Y}_1) + \mathbb{H}(\bm{\bm{Y}}, \hat{Y}_2)
    \end{split}
\end{equation}
where $\hat{Y}_1=C_1(z_1 \oplus s_1)$, $\hat{Y}_2=C_2( z_2 \oplus s_2)$ are the predictions of the first view and the second view for unlabeled data,
respectively and $C_1(\cdot)$ and $C_2(\cdot)$ are multi-layer neural network that take the concatenation of both view-specific information and shared representation as the input, and output the prediction results. The first term aims to minimize the inconsistency between two predicted labels for unlabeled data, and the remaining terms are the cross-entropy loss on the labeled data for the two views respectively.

\he{For the following algorithm, change Require to Input, and Ensure to Output.}\lc{updated}
\begin{algorithm}[t]
    \begin{algorithmic}
        \caption{\method\ Algorithm}
        \label{alg1}
        \REQUIRE  The total number of iterations $T$, two views $\bm{X_1}$ and $\bm{X_2}$, and the label $\bm{\bm{Y}}$, parameters $t_1$, $t_2$, $t_3$, $\alpha, \beta$.\\
        \ENSURE The well-trained classifier $C$.\\
        \STATE Randomly initialize the weights of the neural network.
        \FOR{$t=1$ to $T$}
            \STATE \textbf{Step 1: } Update the discriminator $D$ for $t_1$ times via stochastic gradient ascent by fixing the rest variables.\\
            $D \sim \nabla \E_{\bm{x_1, x_2}\sim p(\bm{X_1, X_2})}[\log(D(G_1(\bm{x_1}))) + \log(1 - D(G_2(\bm{x_2})))]$\\
            \STATE \textbf{Step 2: } Update $G_1$, $G_2$, $F_1$ and $F_2$ for $t_2$ times via stochastic gradient descent by fixing the rest variables.\\
            $G_1 \sim \nabla \E_{\bm{x_1, x_2}\sim p(\bm{X_1, X_2})} [\log D(G_1(\bm{x_1})) + \beta \|\bm{Y}-\hat{Y}\|_2^2$ \\
            \quad\quad\quad $ + \alpha   \gamma_2 \gamma_1^2 \|\bm{x_2}-F_2(G_1(\bm{x_1})) \|_2^2 + \alpha \gamma_1^3\|\bm{x_1}-F_1(G_1(\bm{x_1})) \|_2^2]$\\
            $G_2 \sim \nabla \E_{\bm{x_1, x_2}\sim p(\bm{X_1, X_2})}[\log(1-D(G_2(\bm{x_2}))) + \beta \|\bm{Y}-\hat{Y}\|_2^2$\\
            \quad\quad\quad $+ \alpha  \gamma_1 \gamma_2^2 \|\bm{x_1}-F_1(G_2(\bm{x_2}))\|_2^2 + \alpha \gamma_2^3 \|\bm{x_2}-F_2(G_2(\bm{x_2}))\|_2^2 ]$\\
            $F_1 \sim \nabla \E_{\bm{x_1, x_2}\sim p(\bm{X_1, X_2})} [\alpha  \gamma_1 \gamma_2^2\|\bm{x_1}-F_1(G_2(\bm{x_2})) \|_2^2 + \beta \|\bm{Y}-\hat{Y}\|_2^2$\\
            \quad\quad\quad $ + \alpha \gamma_1^3 \|\bm{x_1}-F_1(G_1(\bm{x_1})) \|_2^2]$\\
            $F_2 \sim \nabla \E_{\bm{x_1, x_2}\sim p(\bm{X_1, X_2})} [\alpha  \gamma_2^3\|\bm{x_2}-F_2(G_2(\bm{x_2})) \|_2^2 + \beta \|\bm{Y}-\hat{Y}\|_2^2$\\
            \quad\quad\quad $ + \alpha  \gamma_2 \gamma_1^2\|\bm{x_2}-F_2(G_1(\bm{x_1})) \|_2^2]$\\
            \STATE \textbf{Step 3: } Update the classifier $C$ for $t_3$ times via stochastic gradient descent by fixing the rest variables.\\
            $C \sim \nabla \E_{\bm{x_1, x_2}\sim p(\bm{X_1, X_2})} \|\bm{\bm{Y}}-C(z_s \oplus S_1 \oplus S_2)\|_2^2$ 
        \ENDFOR
    \end{algorithmic}
\end{algorithm}

\subsection{Proposed Algorithm}
We propose to solve the overall objective function via the stochastic block coordinate gradient descent method. Our algorithm is presented in Algorithm \ref{alg1}. It takes as input a data set with two views and the label information, the total number of iterations, as well as several parameters, and outputs the well-trained model. The algorithm works as follows. In Step 1, we first update the discriminator for $t_1$ times by fixing the other variables and performing a stochastic gradient ascent.
Then in Step 2, we only update the two encoders and the two decoders for $t_2$ times via stochastic gradient descent, as discussed in the previous subsection regarding the optimal encoders and decoders. Finally, in Step 3, we update the classifier for $t_3$ times via stochastic gradient descent.

\subsection{Extension to Multiple Views}
In this subsection, we extend our proposed techniques to more generic settings with multiple views. For each pair of views $x_i^j$ and $x_k^j$, we could compute its affinity matrix $C_{i,k}^j$ and the hidden representation $H_i$ as follows:
\begin{equation}
    \begin{split}
        \bm{C_{i,k}^j} & = \tanh{((\bm{M_i^j})^T \bm{W_{i,k}}\bm{M_k^j})} \\
         H_i^j &= \tanh{(\bm{W_i}\bm{M_i^j} + \sum_{k\neq i}^v(\bm{W_k} \bm{M_k^j})\bm{(C_{i,k}^j)^T)}} 
    \end{split}
\end{equation}
where $\bm{W_{i,k}}\in \mathbb{R}^{d_i \times d_j}$, $\bm{W_i}\in \mathbb{R}^{d \times d_i}$ and $\bm{W_k}\in \mathbb{R}^{d \times d_k}$ are weight matrices, and $d$ is the output dimension. Using the one vs. all strategy~\cite{RifkinK03}, we propose \textit{Centroid-}\method~ framework for the scenario with multiple views, where we consider the first view as the centroid view (assuming we have the prior knowledge that the first view is the most important view). With this setting, we want the discriminator $D(\cdot)$ to distinguish whether the representation is generated by the first view $\bm{X_1}$ or not, instead of determining whether the representation is generated by the first view or the second view. Thus, the min-max objective function could be adjusted as follows:
\begin{equation}
    \begin{split}
        L_0^{(v)} & = \E_{\bm{x}\sim p(\bm{X_1})}\log(D(G_1(\bm{x_1}))) \\ 
        & + \frac{1}{v-1}\sum_{i=2}^v \E_{\bm{x}\sim p(\bm{X_i})} \log(1 - D(G_i(\bm{x_i}))) \\
    \end{split}
\end{equation}
The goal of the discriminator $D(\cdot)$ is to maximize the probability of labeling true to the representation generated by the first view and labeling false to the representation generated by the other views. This enforces the representation generated by the other views to be as similar to the representation generated by the first view as possible.
Similar to the cross reconstruction loss introduced in the previous subsection, we expect that for each view other than the first view, the decoder $F_i(\cdot)$ can transform the representation generated by the $i^{\textrm{th}}$ view back to the first view. In addition, we also expect that $F_i(\cdot)$ be able to transform the representation back to the $i^{\textrm{th}}$ view. Thus, the loss function could be modified as follows:
\begin{equation}
    \label{loss_3_multiple_views}
    \begin{split}
        L_v^{(v)} & = \E_{\bm{x_1}\sim p(\bm{X_1})} \|\bm{x_1}-\hat{\bm{x}}_1 \|_2 +  \E_{\bm{x_2}\sim p(\bm{X_2})} \|\bm{x_2}-\hat{\bm{x}}_2 \|_2 \\
        & + \ldots \E_{\bm{x_v}\sim p(\bm{X_v})} \|x_v-\hat{\bm{x}}_v \|_2\\
                 & =  \frac{1}{v}\sum_{i=1}^v  \E_{(\bm{x_1},\bm{x_i})\sim p(\bm{X_1}, \bm{X_i})} \|\bm{x_1}-F_1(G_i(\bm{x_i})) \|_2 \\
                 & +   \frac{1}{v}\sum_{i=1}^v \E_{\bm{x_i}\sim p(\bm{x_i})} \|\bm{x_i}-F_i(G_i(\bm{x_i})) \|_2
    \end{split}
\end{equation}
where the first summation is the loss of transforming the representation generated from the other views back to the first view, and the second term is the reconstruction error of transforming the representation back to its original view.
To leverage the label information, we propose to update the cross-entropy loss as follows:
\begin{equation}
    \begin{split}
        L_c^{(v)}  = E_{(\bm{x_1}...\bm{x_v})\sim p(\bm{X_1}...\bm{X_v})} \mathbb{H}(\bm{\bm{Y}}, \hat{Y})
    \end{split}
\end{equation}
where $\hat{Y}=C(z_s \oplus s_1 \oplus s_2\oplus...\oplus s_v)$ and $z_s= \frac{1}{v}\sum_{i=1}^{v}z_i$.

\section{Experimental Results}
In this section, we demonstrate the performance of our proposed algorithm \method ~in terms of the effectiveness by comparing it with state-of-the-art methods.

\subsection{Experiment Setup}
\textbf{Data sets and experiment setting:}
We mainly evaluate our proposed algorithm on the following data sets: one semi-synthetic data set based on WebKB\footnote{\url{http://www.cs.cmu.edu/~webkb/}}, one synthetic data set, four real world data sets, including Noisy MNIST~\cite{WangALB15}; XRMB~\cite{westbury1994x}, CelebA~\cite{liu2015deep} and  Caltech-UCSD Birds~\cite{WelinderEtal2010}. Table~\ref{table_stat} shows the statistics of these data sets. In the experiments, we set $t_1=2$, $t_2=2$, $t_3=3$, $\alpha=1$, $\beta=1$, the initial learning rate to be 0.03 with decay rate 0.96 if not specified, and the optimizer is momentum stochastic gradient descent.
To yield richer and more discriminative feature representation, we adjust the dimensionality of the representation denoted as $h$ based on the dimensionality of the input data, and the dimensionality $h$ is specified for each data set. The number of layers for the encoders, the decoders, and the discriminator can be adjusted to specific application scenarios. Since all the deep model baselines (e.g., Deep IB) are using fully connected layer networks, in the experiments, we set the encoders, the decoders, and the discriminator to be three-layer fully connected layer networks, and the classifier to be a two-layer fully connected layer network.

\begin{table*}
\centering
\begin{tabular}{|*{5}{c|}}
\hline \textbf{Data Sets}       & \textbf{\# of Training Samples} & \textbf{\# of Test Samples} & \textbf{\# of views}  & \textbf{\# of labels}  \\
\hline WebKB            & 6,626     & 1,656     &  2    & 7   \\
\hline Noisy MNIST      &  32,000   & 10,000    &  2    & 10  \\
\hline XRMB             &  75,000   & 15,000    &  2    & 15  \\
\hline CalebA           &  45,000   & 12,000    &  2    & 40  \\
\hline Caltech-UCSD Bird      &  950      & 238       &  2    & 40  \\
\hline
\end{tabular}
\caption{Statistics about five real-world data sets}
\vspace{-0.3cm}
\label{table_stat}
\end{table*}

\noindent \textbf{Reproducibility:}
All of the real-world data sets are publicly available. The code of our algorithms could be found via the link~\footnote{\url{https://github.com/Leo02016/ANTS}}. The experiments are performed on a Windows machine with 8GB GTX 1080 GPU.

\noindent \textbf{Comparison methods:}
In our experiments, we compare with the following methods: FCL, a four-layer fully-connected neural network trained with two concatenated views; Linear CCA~\cite{ChaudhuriKLS09}, linear transformations of two views; DCCA~\cite{andrew2013deep}, nonlinear transformations of two views by canonical correlation analysis; DCCAE~\cite{WangALB15}, deep canonical correlated auto-encoders for two views; Deep IB~\cite{wang2019deep}, deep information bottleneck for multi-view learning. Following~\cite{WangALB15}, we also use SVM to make the final prediction for CCA based methods, after finding the latent representation. \\

\begin{table}
\centering
\begin{tabular}{|*{5}{c|}}
\hline \multicolumn{3}{|c|}{WebKB}\\
\hline \textbf{Model}       & \textbf{Accuracy} & \textbf{F1 Score}  \\
\hline linear CCA    & 0.575 $\pm$ 0.012  &    0.522 $\pm$ 0.013  \\
\hline DCCA          & 0.617 $\pm$ 0.009  & 0.604 $\pm$ 0.015  \\
\hline DCCAE         & 0.652 $\pm$ 0.010  &    0.629 $\pm$ 0.010  \\
\hline FCL           & 0.680 $\pm$ 0.002  &    0.656 $\pm$ 0.003  \\
\hline Deep IB           & 0.650 $\pm$ 0.009  &    0.646 $\pm$ 0.009  \\
\hline \method      & \textbf{0.708 $\pm$ 0.004} & \textbf{0.686 $\pm$ 0.004}    \\
\hline \multicolumn{3}{|c|}{Synthetic Data} \\
\hline \textbf{Model}   & \textbf{Accuracy}  & \textbf{F1 Score}\\
\hline linear CCA    &  0.674    $\pm$ 0.009  & 0.672 $\pm$ 0.006 \\
\hline DCCA          &  0.709 $\pm$ 0.007  & 0.711 $\pm$ 0.009 \\
\hline DCCAE         &  0.778 $\pm$ 0.007  & 0.773 $\pm$ 0.012 \\ 
\hline FCL           &  0.897 $\pm$ 0.004  & 0.897 $\pm$ 0.004 \\
\hline Deep IB           &  0.854 $\pm$ 0.007  & 0.854 $\pm$ 0.007 \\
\hline \method  & \textbf{0.953 $\pm$ 0.005} &    \textbf{0.953 $\pm$ 0.005}    \\
\hline
\end{tabular}
\caption{Results on WebKB and synthetic data sets}
\vspace{-0.3cm}
\label{table1}
\end{table}

\begin{figure}
\begin{center}
\begin{tabular}{c}
\includegraphics[width=0.75\linewidth]{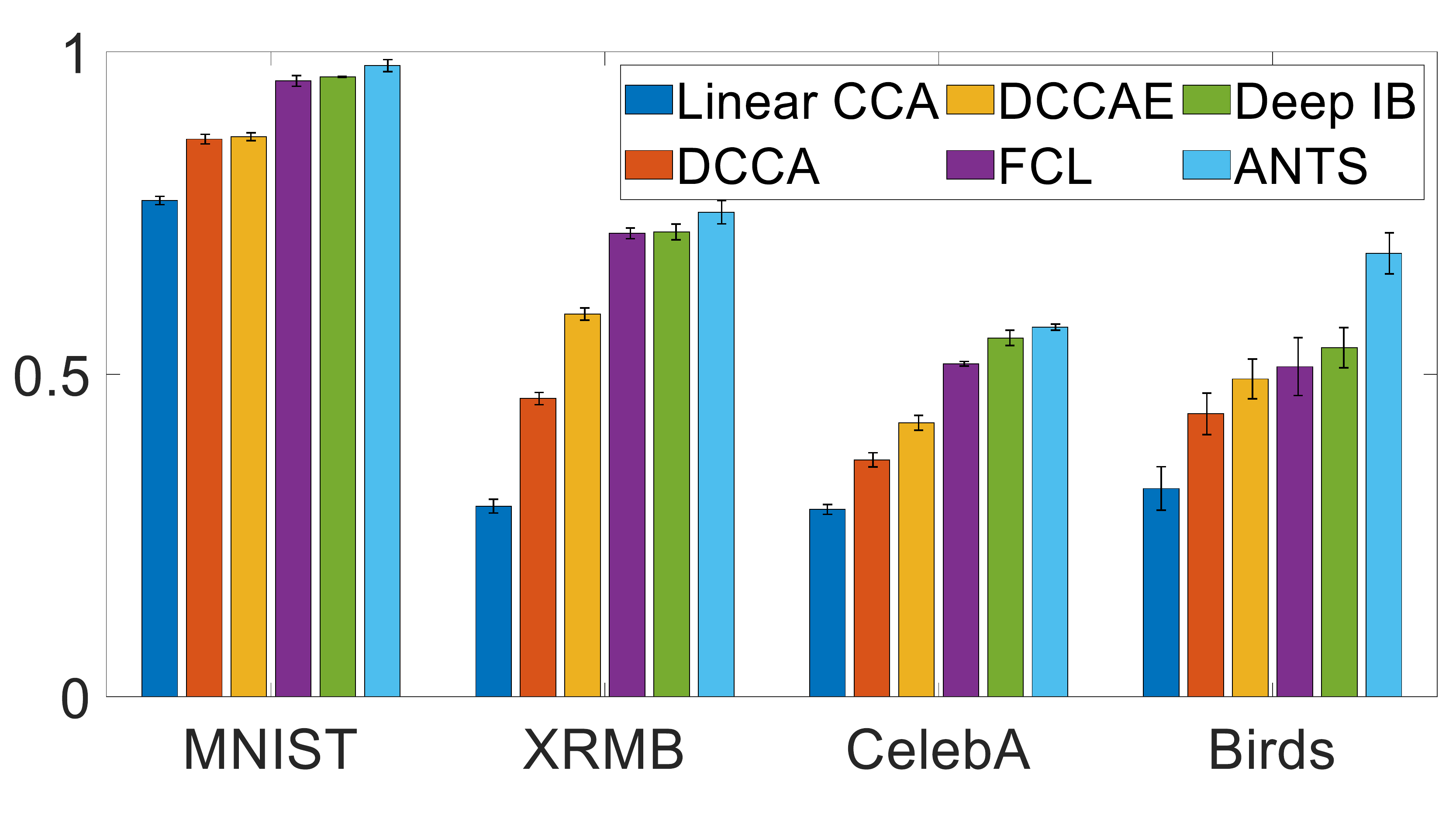} \\
(a) Accuracy \\
\includegraphics[width=0.75\linewidth]{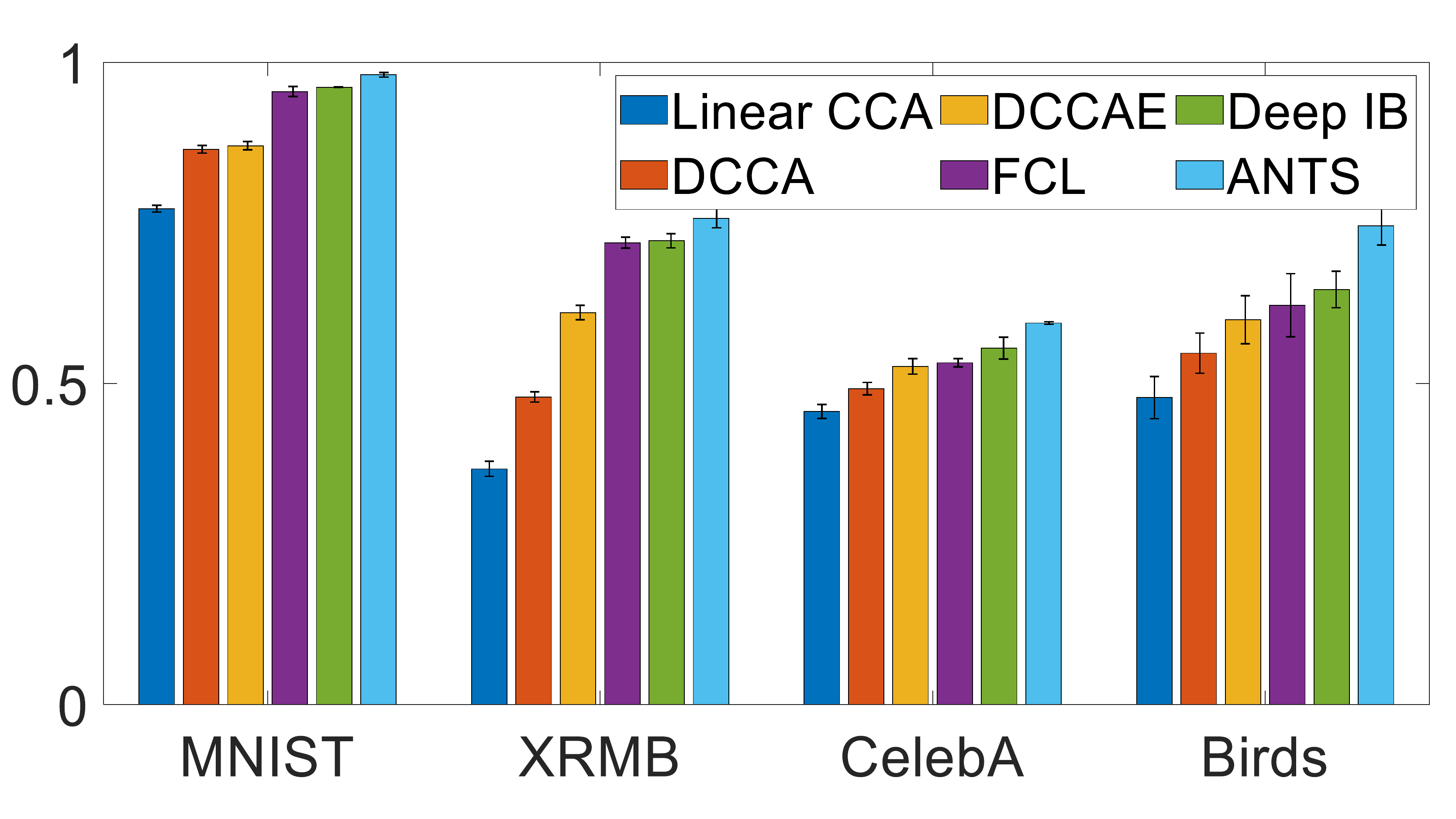}\\
(b) F1 Score \\
\end{tabular}
\end{center}
\caption{Performance comparison on Noisy MNIST, XRMB, CelebA and Caltech-UCSD Birds (Best viewed in color)}
\vspace{-0.3cm}
\label{fig1}
\end{figure}

\subsection{Semi-synthetic and Synthetic Data Set}
The goal of these two semi-synthetic/synthetic data sets is to show that CCA based methods cannot handle the noisy data as they ignore the label information, thus leading to a decrease in prediction performance. In our experiments, we evaluate the performance of our proposed method on one semi-synthetic and one synthetic data set.
 
The semi-synthetic data set is based on an online textual data set named WebKB, which consists of over 8000 web pages from multiple universities, and these web pages are manually classified into seven categories (labels), e.g., student, faculty, staff, course, project, etc. We extract TF-IDF representation by converting the content of the web pages to word vectors. The dimensionality of this representation is 10,111 after we remove the stop words. Then, we use two different non-linear mapping functions ($\sigmoid(\cdot)$ and $\tanh(\cdot)$) after normalization to construct the two views. Besides, we add random normal distribution noise  ($\mathcal{ N}(0, 0.2)$) to the constructed views and shuffle the order of the features. The final dimensionalities of the two views are both 10,611. For our proposed algorithm, we set the dimensionality of the representation to be 300, $k_1=k_2=50, d_1=d_2 = 233$ (we use the zero-padding to fill the last segment) and we use 80\% data as the training set and the remaining 20\% as the test set.

The synthetic data set is generated in the following way. First, we sample 10,000 data points with the feature dimensionality of 500 categorized into two balanced classes based on the method used in~\cite{guyon2003design}. Then, we use two different non-linear mapping functions ($\sigmoid(\cdot)$ and $\tanh(\cdot)$) after normalization to construct two views. Besides, we add random noise from two normal distributions ($\mathcal{ N}(0, 0.5)$ and $\mathcal{N}(1, 0.7)$) to the constructed views and shuffle the order of the features. 
The noise with the dimensionality of 50 is added to both views and the final dimensionality of both views are 550. For our proposed algorithm, we set the dimensionality of the representation to be 100, $k_1=k_2=11, d_1=d_2=50$ and we use 80\% data as the training set and the remaining 20\% as the test set.

The comparison results in terms of the accuracy and the F1 score are shown in Table~\ref{table1}. This table shows that our proposed model is better than the others in terms of both evaluation metrics. By observation, we find that for the synthetic data set, the accuracy of \method~ reaches 0.953 compared with 0.897 for the second-best baseline. This shows that our proposed method handles noisy data very well because the co-attention module helps users filter out the noisy features by putting a small weight on these noisy features, while the CCA based methods achieve suboptimal performance as they do not utilize the label information to derive the latent representation. For the semi-synthetic data set WebKB, our proposed method achieves the best performance compared with all other baselines.

\subsection{Real World Data Sets}
Next, we test the performance of our proposed method on four real-world data sets, including Noisy MNIST, XRMB, CelebA, and Caltech-UCSD Birds. The first data set is generated from MNIST data set~\cite{lecun1998gradient}. We rotate the original MNIST data set at angles randomly chosen from [$-\frac{\pi}{4},\frac{\pi}{4}$] to generate the first view and blur the images to create the second view. Both views have 784 features and 10 digits are considered as labels. Because some baselines are time-consuming, we repeat the experiments by randomly sampling 32,000 data points for training and another 10,000 data points for testing. For our proposed method, we set $k_1=k_2=7, d_1=d_2=112$. The second data set is from the Wisconsin X-ray Microbeam Database, which consists of two views. The first view is acoustic data with 273 features and the second view is articulatory data with 112 features. We repeat the experiments by randomly sampling 75,000 data points for training from the first 15 classes and another 15,000 data points for testing. For our proposed method, we set the dimensionality of the representation to be 50, $\alpha=2$, $\beta=1$, $k_1=k_2=5, d_1=55$, and $d_2=23$ (we use the zero-padding to fill the last segment). The third data set is composed of 202,599 images of celebrities, and 40 labeled facial attributes. We repeat the experiments by randomly sampling 45,000 data points for training and over 12000 images for testing. In this experiment, we choose 5 facial attributes related to hair as the labels, e.g., bald, black hair, straight hair, wavy hair, and wearing a hat. To get two views for this data set, we adopt the idea of creating two views in the Noisy MNIST data set by randomly rotating an image to create the first view and blurring the image to generate the second view. To reduce the run-time for the baselines, we first convert the RGB images to gray-scale images and then resize them to 100 $\times$ 100 to be fair to all competitors. Therefore, the final dimensionalities of the two views are both 10,000. For our proposed method, we set $k_1=k_2=50, d_1=d_2=200$. 

For the last data set, we sample 1188 images from the first 40 classes. To get the two views, we follow the similar procedure of generating Noisy MNIST~\cite{WangALB15}. We rotate each image at angles randomly chosen from [$-\frac{\pi}{4},\frac{\pi}{4}$] to generate the first view. The procedure for generating the second view consists of two steps. In the first step, we compute the mean $\mu$ and the standard deviation $\sigma$ of each image $x_2^j$, generate the random noise $\mathcal{N}(\mu, \sigma)$. In the second step, we sample $5\%$ index denoted as \textit{sampled\_index} of the image $x_2^j$ and add the noise to the image based on the following function, $x_2^j(sampled\_index) = x_2^j(sampled\_index) + 0.1 * \mathcal{N}(\mu, \sigma)$. Considering the high time complexity of some baselines (\eg, CCA, DCCAE, Deep IB) for high dimensional data like images, we fine-tune a pre-training model (\eg, VGG-19~\footnote{https://pytorch.org/docs/stable/torchvision/models.html}) on our dataset to extract the hidden representation of each image and then use the representation as the input for our experiments. The dimension of the hidden representation for both views is 1024. For our proposed method, we first apply the Mask-RCNN~\cite{he2017mask} to split each image into three segments, and then we also use the same pre-training model to get the representation for each segment of each image (if the image could be split into more than three segments, we pick the top two segments with the most pixels and keep the rest in the third segment; if the image could only be split into two segments, we randomly pick a segment as the third one). For this dataset, we set the learning rate to be 0.08 with decay rate 0.96, the batch size to be 3 (as we find a large batch size leads to the worse performance) and the number of epoch to be 150, $\alpha=2$, $\beta=1$ and we use 80\% data as the training set and the remaining 20\% as the test set.

For these four real-world data sets, the y-axis is accuracy in Figure~\ref{fig1} (a) and F1 score in Figure~\ref{fig1} (b), respectively. These figures show that our proposed model, \method, outperforms the others with both evaluation metrics in all three data sets. Notice that on the Noisy MNIST data set, most CCA based methods achieve an accuracy above 85\%, while \method~ even reaches the accuracy of 98.05\%, compared with the accuracy of 96.12\% for the second-best method (Deep IB). As we mentioned, the two views of the Noisy MNIST data set are either contaminated by the Gaussian noise or distorted by the random rotation but the experimental results on this data set demonstrate that our proposed method could handle the noise very well. 
In the XRMB data set, \method~ boosts the prediction performance by more than 3.5\% compared with the second-best baseline in terms of both accuracy and F1 score. 
In Caltech-UCSD Birds data set, \method~ outperforms the second-best algorithm by more than 8\%, which demonstrates the effectiveness of handling the noisy features for our proposed method. To show how the proposed method interprets the predictions, we conduct a case study in Section 5.1, which further shows why our proposed method outperforms state-of-the-art algorithms in this data set.

\subsection{Parameters Analysis}
In this section, we analyze the parameter sensitivity of our proposed~\method~ algorithm on Noisy MNIST data set, including $t_1$, $t_2$,  $t_3$, the dimensionality of the representation $h$ and the hyper-parameters $\alpha$ and $\beta$. In all experiments, we use 10,000 data point as the training data and 10,000 data point as the test data; we set the batch size to be 50, and the total iterations number to be 10,000 and the learning rate to be 0.03. In this parameters analysis, if not specified, we let $h=30$, $t_1=2$, $t_2=2$, $t_3=3$, $\alpha=1$ and $\beta=1$.

To evaluate $t_1$, $t_2$, and $t_3$, we increase the value of one parameter and fix the other three parameters to be 1, parameter $h$ to be 30. The x-axis in the Figure~\ref{parameter_k} (a) is the value of each parameter, e.g. $t_1=1,\ldots, 5$ and the y-axis is the classification error. Based on the observations, we find that the classification error increases as $t_1$  increases, and a large value of $t_1$ leads to a bad representation and high classification error. The classification error decreases as $t_2$ and $t_3$ increases. The model reaches its optimal state when $t_2=2$ and $t_3=3$ and does not change much when $t_2$ and $t_3$ increases. In conclusion, $t_1$ and $t_2$ should be set to an identical number, e.g.  $t_1=2$ and $t_2=2$, since we need to balance the loss among the discriminator and the generators. The suitable value for $t_3$ could be 3 based on the classification error shown in the Figure~\ref{parameter_k} (a). Similar to the Figure~\ref{parameter_k} (a), the x-axis of the Figure~\ref{parameter_k} (b) is the dimensionality of the latent representation and the y-axis is the classification error. Based on the classification error in Figure~\ref{parameter_k} (b), we observe that the dimensionality of representation does not influence the classification error too much, although the model achieves the slightly better performance when $h=30$ than other value of $h$. However, when the dimensionality of representation is less than 10, the classification error increases rapidly. One explanation for this is that the lower-dimensional representation (h<10) fails to capture all useful information from original data, thus leading to an increase in classification error. For the hyper-parameters $\alpha$ and $\beta$ that balance the cross reconstruction loss and the classification loss, we fix one parameter and increase the value of another one. In Figure~\ref{parameter_k} (c), the y-axis is the classification error and the x-axis is the value of these hyper-parameters. By observation, we find that when we increase the value of alpha or beta, the classification error also increases slightly. One possible explanation for this observation is that the latent representation is slightly influenced by the imbalance between the cross reconstruction loss and the classification loss.

\begin{figure}
\begin{center}
    \includegraphics[width=0.66\linewidth]{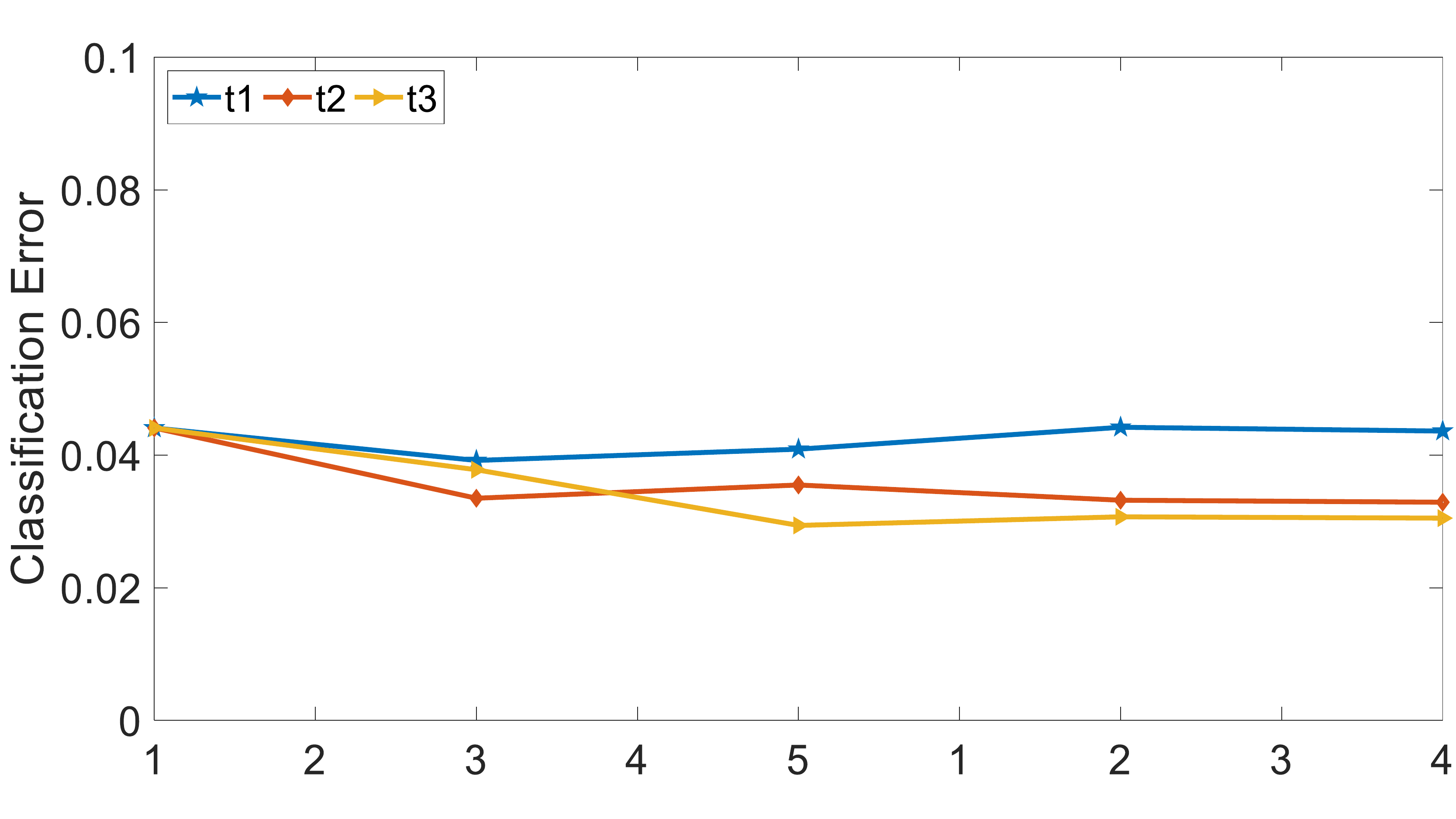} \\
    (a) Parameter $t_1$,$t_2$ and $t_3$  \\
    \includegraphics[width=0.66\linewidth]{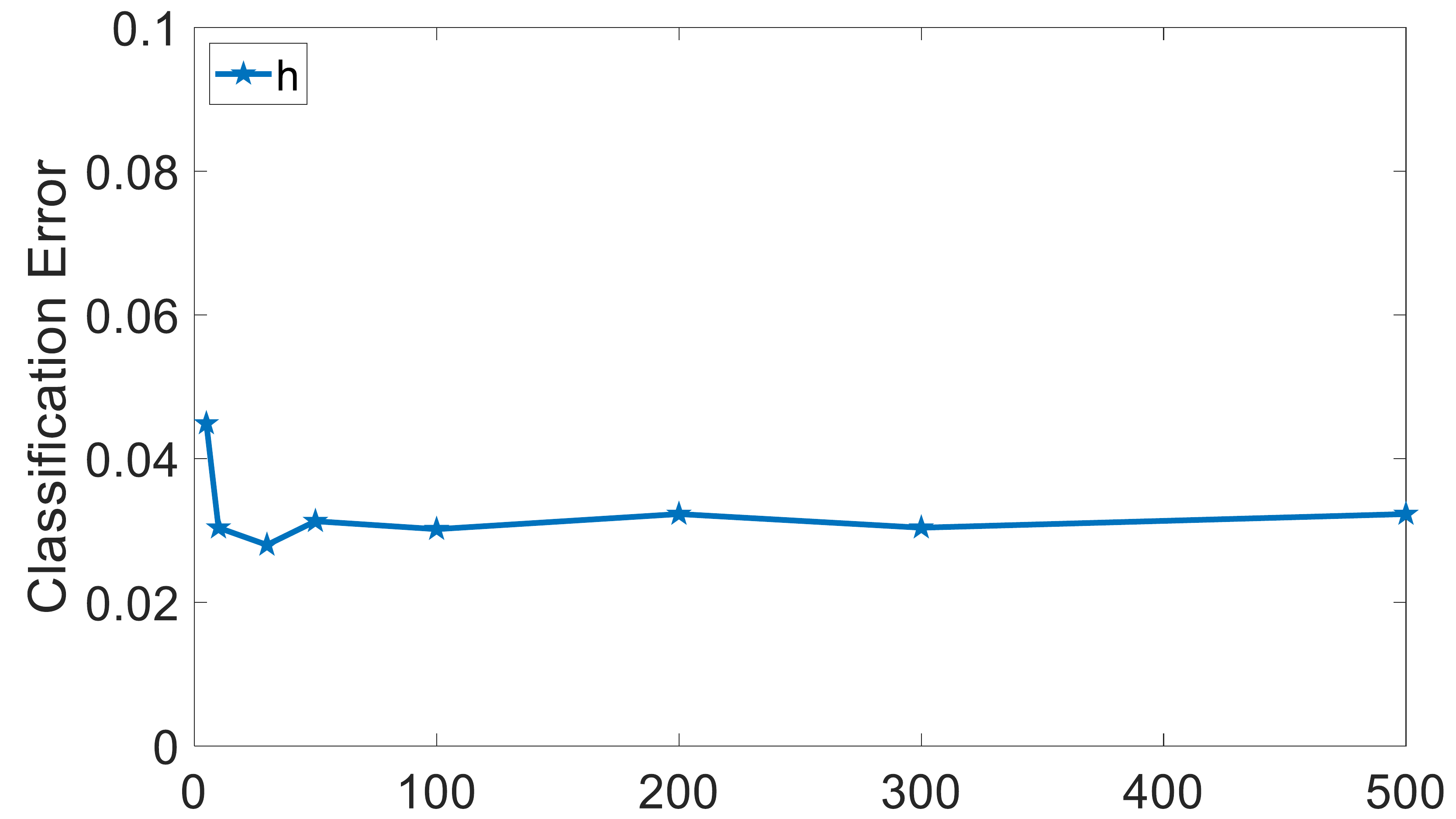} \\
     (b) Parameter $h$ \\
    \includegraphics[width=0.66\linewidth]{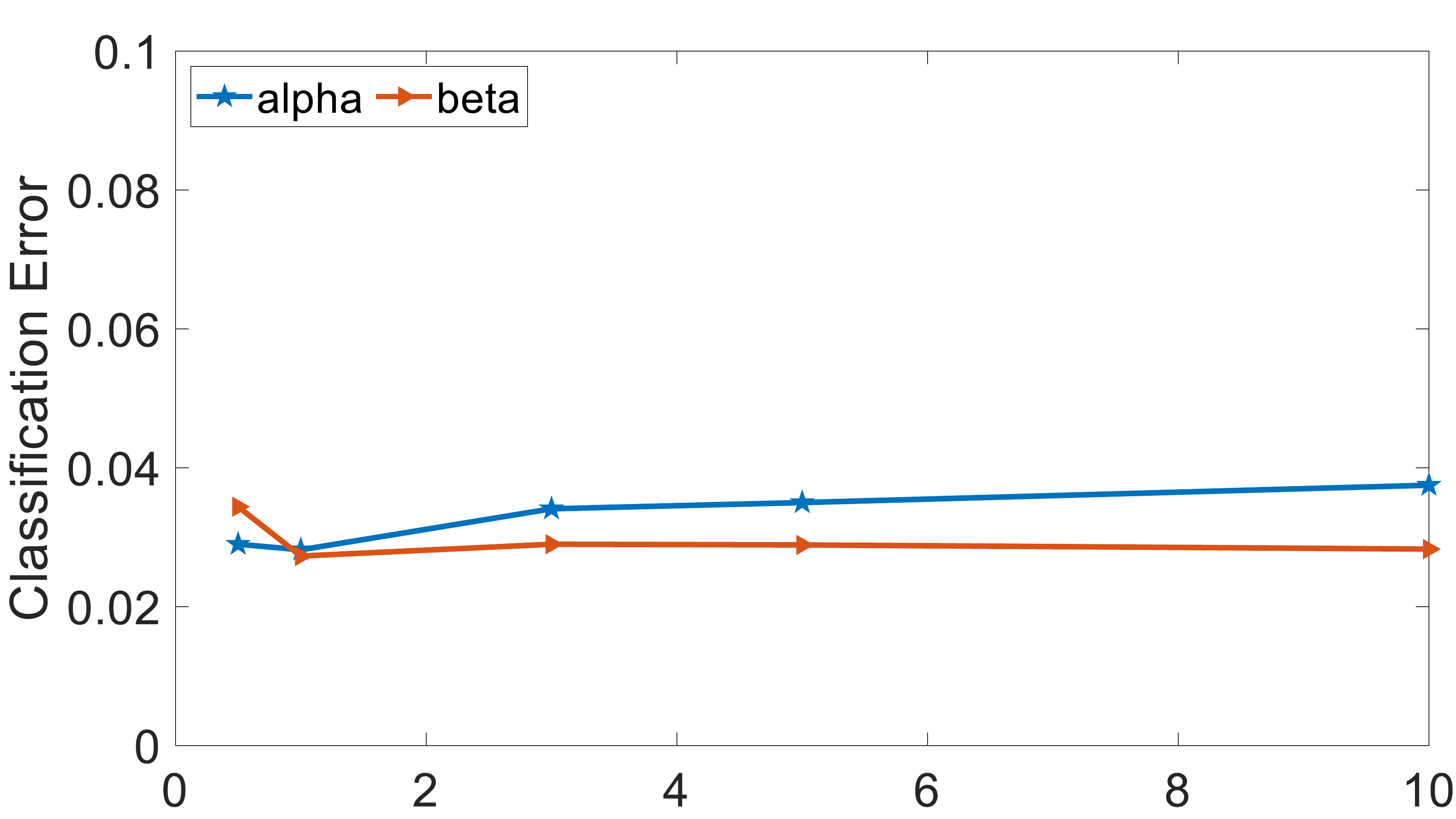} \\
    (c) Parameter $\alpha$ and $\beta$ \\
\end{center}
\caption{Parameter analysis (Best viewed in color)}
\vspace{-0.3cm}
\label{parameter_k}
\end{figure}
\section{Case Studies}
In this section, we present a case study on Caltech-UCSD Birds to show the interpretability of the proposed method in Subsection 5.1, and case studies on Noisy MNIST to demonstrate the effectiveness of adding complementary information in Subsection 5.2. Finally, We compare several different regularizers in Subsection 5.3.

\begin{figure*}[t]
\includegraphics[width=0.85\linewidth]{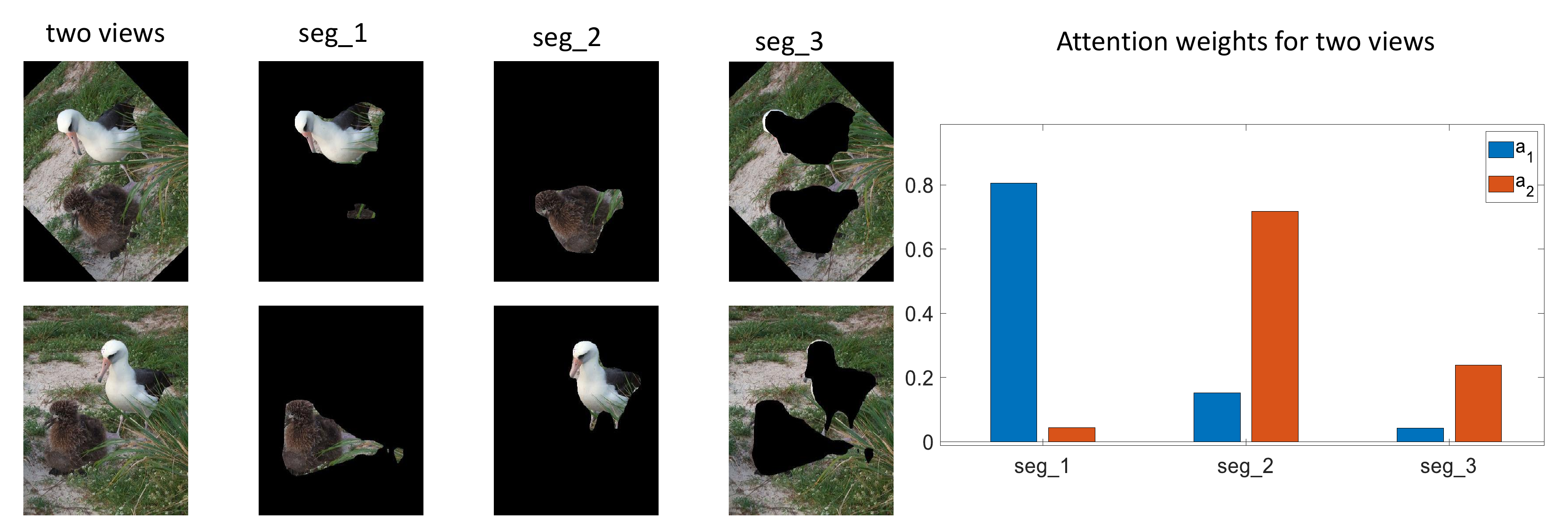}\\
\centering (a) Predicting Laysan Albatross\\
\includegraphics[width=0.85\linewidth]{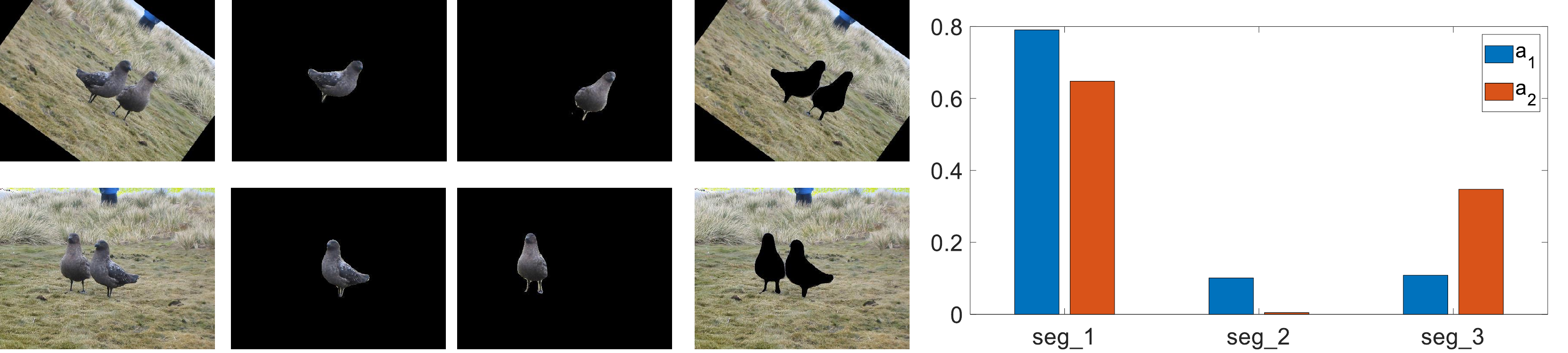}\\
\centering (b) Predicting Sooty Albatross\\
\includegraphics[width=0.85\linewidth]{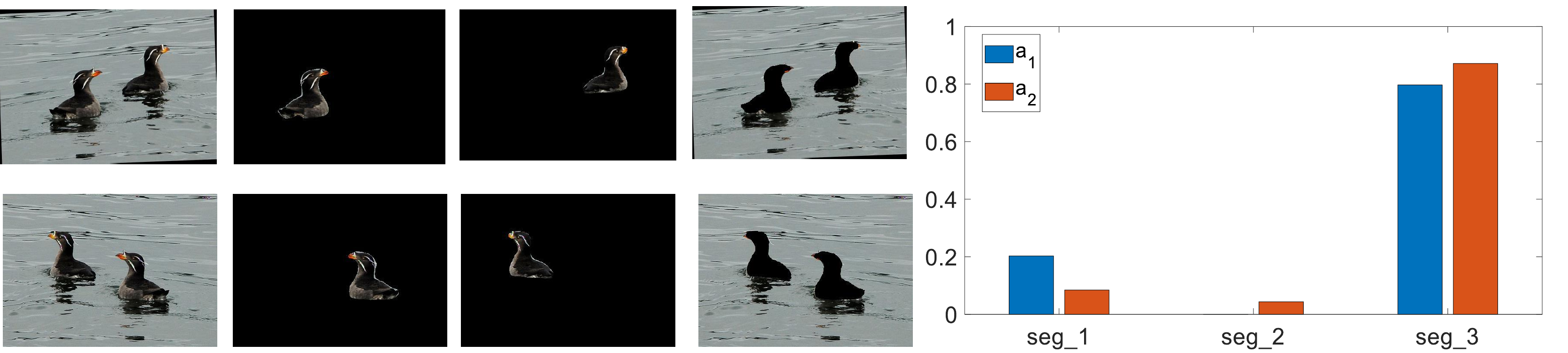}\\
\centering (c) Predicting Rhinoceros Auklet\\
\includegraphics[width=0.85\linewidth]{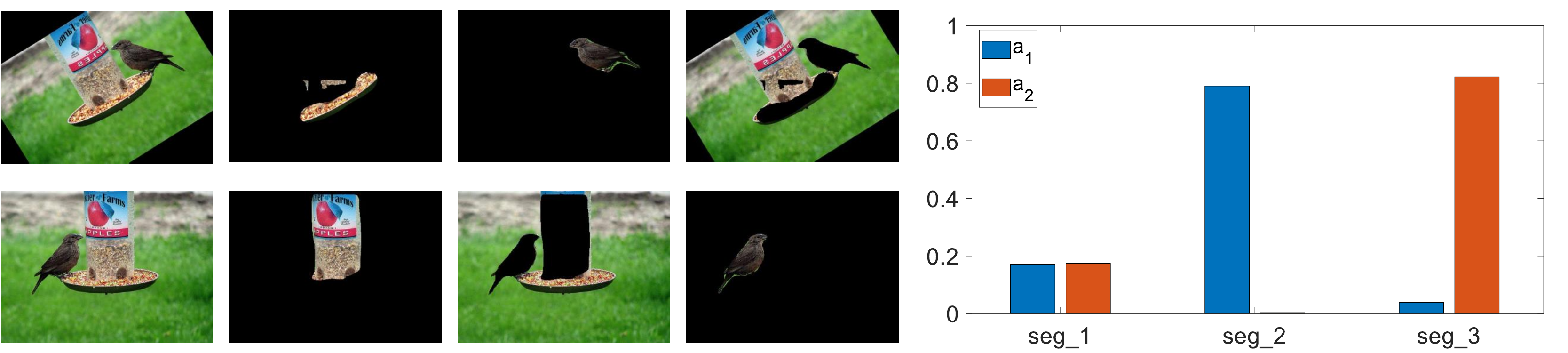}\\
\centering (d) Predicting Rusty Blackbird\\
\caption{A case study on the Caltech-UCSD Birds (Best viewed in color). In each sub-figure, the first row is the first view and the second row is the second view; seg\_1, seg\_2  and seg\_3 are three image segments for each view. The x-axis of the bar chart is the index of the segment, the y-axis is the attention weight, and the bars in blue (orange) are the attention weights for the first (second) view.}
\label{bird_case_study}
\end{figure*}

\subsection{Interpretation via Co-attention}
In many real-world applications, the end-user must understand the prediction results from predictive models.
Here we conduct a case study to interpret the prediction results and provide relevant clues for the end-users to investigate the vulnerability of the proposed algorithm in the noisy environment. We visualize the experimental results on the Caltech-UCSD Birds in the Figure~\ref{bird_case_study}, where the Figure~\ref{bird_case_study} (a) and (b) are selected from the training set and the Figure~\ref{bird_case_study} (c) and (d) are selected from the test set. The experimental setting is stated in Section 4.3. In each sub-figure, the first row is the first view and the second row is the second view; seg\_1, seg\_2  and seg\_3 are three image segments for each view. The attention wights of each image segment for two views, (\eg, $a_1$ and $a_2$ in Equation~\ref{eq:attention_weights}) are visualized in the bar charts on the right-hand side. The x-axis of the bar chart is the index of the segment, the y-axis is the attention weight, and the bars in blue (orange) are the attention weights for the first (second) view. 

Based on these four sub-figures, we have the following observations. First, we find that in Figure~~\ref{bird_case_study} (a) and (b), \method~ assigns larger weights to the important segments, compared with the bar chart in Figure~~\ref{bird_case_study} (a) and (b). This happens because (a) and (b) are selected from the training set, while (c) and (d) are from the test set. Second, in Figure~~\ref{bird_case_study} (a), there are two different types of birds, and the label of this image is "Laysan albatross", a large white seabird. \method~ assigns 0.80 to the seg\_1 of the first view and 0.72 to the seg\_2 of the second view, which suggests that \method~ successfully captures the most important features of this bird to make an correct prediction. Third, in Figure~\ref{bird_case_study} (c), \method~ makes an incorrect prediction because the model puts a large weight on the wrong features (the background of the image) instead of the features of the birds. With these observations, we could draw the following conclusions. First, our proposed model interprets the prediction results and shows the end-users why the model makes an correct or incorrect prediction by assigning the weights to different segments for each view. Second, our proposed methods is capable of handling both noises very well, as two views are generated in noisy environment.

\begin{figure}
\centering  
\includegraphics[width=0.75\linewidth]{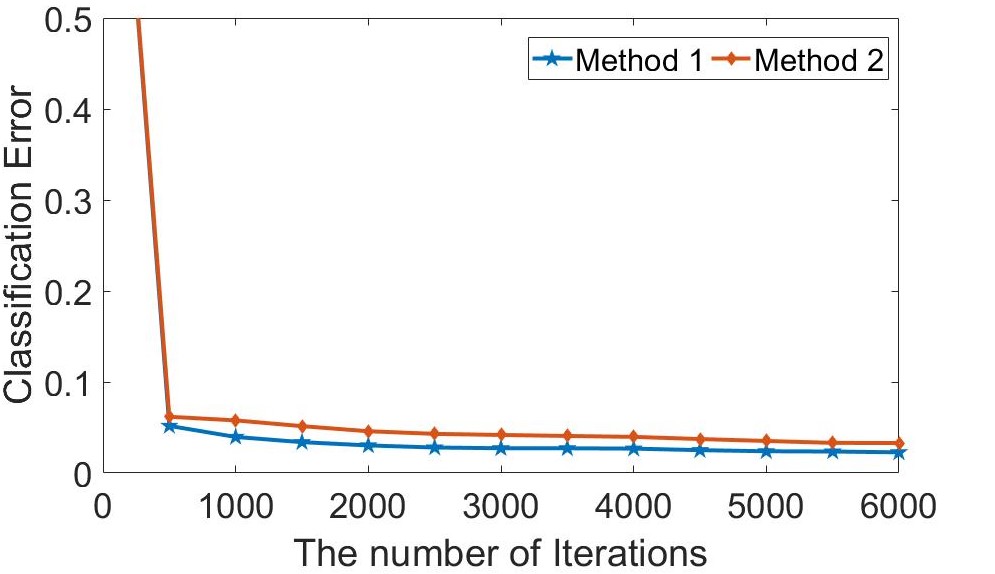}\\
(a) \\
\includegraphics[width=0.75\linewidth]{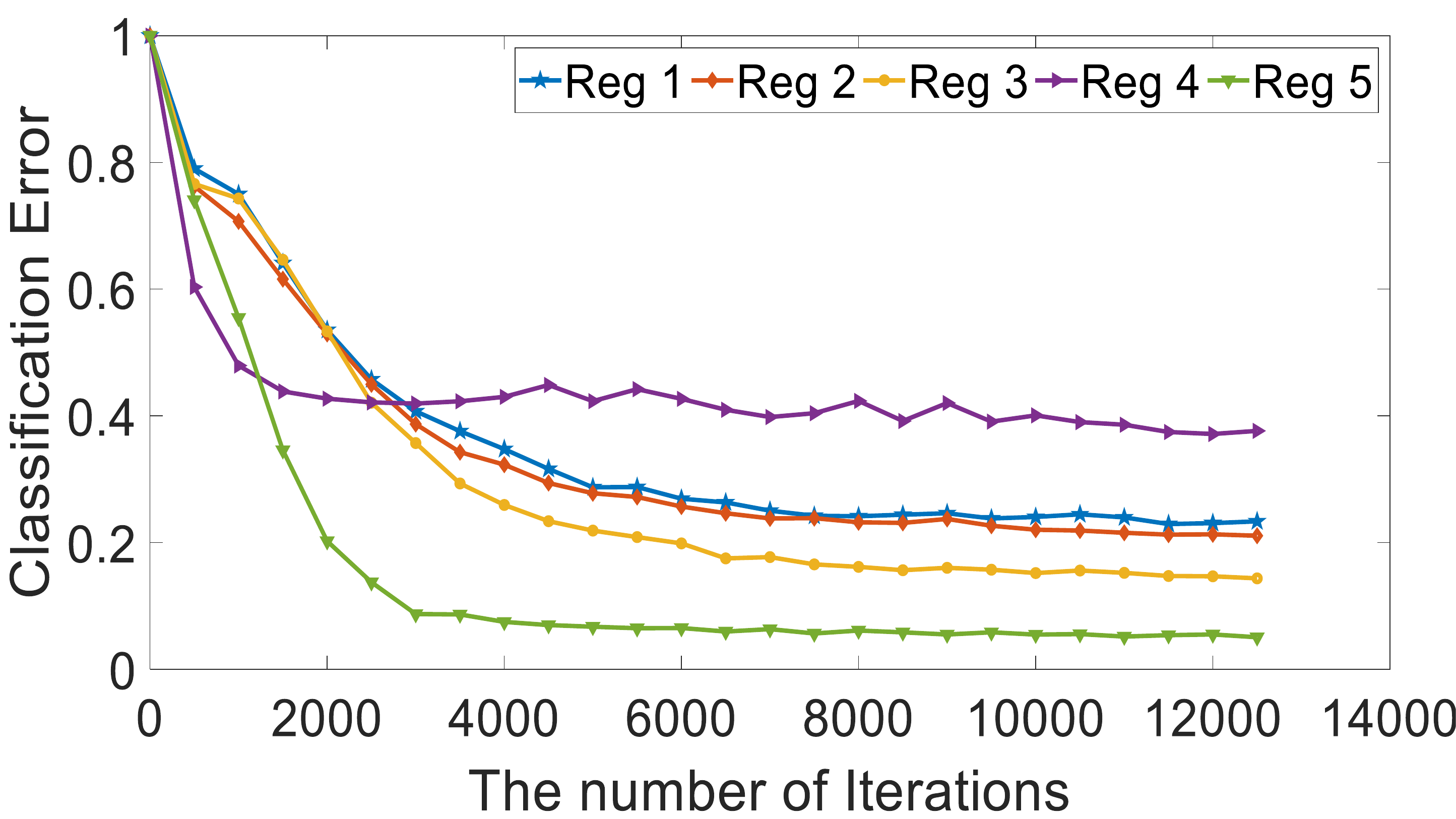}\\
(b)  \\
\caption{(a) Classification error vs the number of iterations on Noisy MNIST data set.
(b) Analysis of the quality of the representation generated by different regularizers. (Best viewed in color)}
\label{fig_case_study}
\end{figure}

\subsection{Ablation Study}
In the ablation study, we aim to demonstrate that the boost in the predictive performance is due to the view-specific information rather than the increased dimensionality of the hidden representation. Thus, we examine our hypothesis in the experiment on the Noisy MNIST data set based on the metric of the classification error. The methods for comparison are listed as follows:
\begin{itemize}
    \item \textit{Method 1}: the proposed method utilizing both the shared information and the view-specific information, namely, \method.
    \item \textit{Method 2}: the proposed method utilizing the shared information only but the shared information is copied three times.
\end{itemize}
Since \method~utilizes not only the shared information but also view-specific information, for a fair comparison, in \textit{Method 2}, the shared information is copied three times such that the dimensionality of the hidden layer in \textit{Method 2} is the same as that of \textit{Method 1}. In Figure~\ref{fig_case_study} (a), we observe that at the iteration of 500, the classification error of both methods decreases to 0.05. Notice that the classification error of \textit{Method 1} is less than 2.3\%,  while \textit{Method 2} drops to 4\% in the end, which demonstrates that leveraging complementary information could further boost the predictive performance.

\subsection{Comparison of View Reconstruction Regularizers}
In this case study, we demonstrate the effectiveness of the \textit{cross reconstruction loss} and the effectiveness of utilizing the label information by evaluating the quality of the generated representation with different regularizers on the Noisy MNIST data set.
The regularizers for comparison are listed as follows:
\begin{itemize}
    \item \textit{Reg 1}: Using $F_1(G_1(x_1))$ to reconstruct $x_1$ and $F_2(G_2(x_2))$ to reconstruct $x_2$ without utilizing the label information.
    \item \textit{Reg 2}: Using $F_1(G_2(x_2))$ to reconstruct $x_1$ and $F_2(G_1(x_1))$ to reconstruct $x_2$ without utilizing the label information.
    \item \textit{Reg 3}: Cross reconstruction loss without utilizing the label information (Equation~\ref{loss_3}).
    \item \textit{Reg 4}: The $L_1$ norm of the difference between two representations generated from two views $\|G_1(x_1) - G_2(x_2)\|_1$ without utilizing the label information.
    \item \textit{Reg 5}: Cross reconstruction loss utilizing the label information.
\end{itemize}

Because we only aim to find out how different regularizers imposed on the encoders influence the generation of the representation, we exclude the classifier during the training stage in Algorithm~\ref{alg1} for the first four regularizers except for \textit{Reg 5}. In other words, we do not utilize the label information to constrain the generation of the representation except for \textit{Reg 5}. For all the experiments, we randomly sample 32,000 examples from the Noisy MNIST data set as our training data, and 10,000 examples as the test data. For a fair comparison, we set $t_1$ to be 2, $t_2$ to be 2 for all five regularizers, $t_3$ to be 3 for the last regularizers, the batch size to be 64, the total iteration number $T$ to be 12,500, and the initial learning rate to be 0.05 with decay rate 0.96. Furthermore, we evaluate the quality of the representation based on the classification error at every 500 iterations. In Figure~\ref{fig_case_study} (b), the y-axis is the classification error evaluated on 10,000 test examples, and the x-axis is the number of iterations. It is easy to see that: when the label information is not utilized during the training stage, \textit{Reg 2} converges slightly faster than \textit{Reg 1}; compared with \textit{Reg 1}, \textit{Reg 2} and \textit{Reg 4}, the model with cross reconstruction loss (\textit{Reg 3}) converges at the faster rate and the quality of the representation generated by \textit{Reg 3} is also better than the representation generated by \textit{Reg 1} and \textit{Reg 2}; when the label information is utilized, the quality of the generated representation is further enhanced, and the classification error rate drops rapidly (see the comparison between \textit{Reg 3} and \textit{Reg 5} in Figure~\ref{fig_case_study} (b)). We also observe that the classification error for the model with $L_1$ norm regularizer (\textit{Reg 4}) decreases to a local minimal error rate of 0.4195 after 3000 iterations and starts to vibrate. The main reason is that there are no constraints imposed to restrict the distribution of $P_{g_1}$ and $P_{g_2}$ and thus, the model with $L_1$ norm regularizer yields the trivial representation.
Based on the results shown in the Figure~\ref{fig_case_study} (b), we can conclude that our model with cross reconstruction loss has a faster convergence rate and tend to find better representation than other regularizers. This demonstrates that cross reconstruction loss indeed enforces the two encoders $G_1(\cdot)$ and $G_2(\cdot)$ to find the shared representation at a faster pace, as mentioned in Subsection 3.2. When the label information is leveraged during the training stage, the convergence rate and the quality of the representation are greatly improved.
\section{Conclusion} 
In this paper, we propose \method\ - a deep adversarial co-attention model for multi-view subspace learning. We extract and integrate both the shared information and complementary information to obtain a more comprehensive representation. By imposing the cross reconstruction loss and incorporating a classifier into the proposed framework, we further enhance the quality of this representation. We also extend our method to accommodate the more generic scenario with multiple views. We compare our proposed method with state-of-the-art techniques on synthetic, semi-synthetic, and real-world data sets to demonstrate that our method leads to significant improvements in the performance. Finally, case studies show how the proposed method interprets the predictive results on an image data set, the importance of leveraging complementary information, and the effectiveness of the regularizer imposed on view reconstruction.
\section{acknowledgement}
This work is supported by National Science Foundation under Award No. IIS-1947203 and IIS-2002540, and IBM-ILLINOIS Center for Cognitive Computing Systems Research (C3SR) - a research collaboration as part of the IBM AI Horizons Network. The views and conclusions are those of the authors and should not be interpreted as representing the official policies of the funding agencies or the government.
\bibliographystyle{abbrv}
\bibliography{reference}
\end{document}